\theoremstyle{plain}
\newtheorem{theorem}{Theorem}[section]
\newtheorem{proposition}[theorem]{Proposition}
\newtheorem{lemma}[theorem]{Lemma}
\newtheorem{corollary}[theorem]{Corollary}
\theoremstyle{definition}
\theoremstyle{remark}
\newtheorem{remark}[theorem]{Remark}
\newcommand{\bbR}{\mathbb{R}}
\newcommand{\bbN}{\mathbb{N}}
\newcommand{\bbE}{\mathbb{E}}
\newcommand{\calT}{\mathcal{T}}
\newcommand{\calF}{\mathcal{F}}
\newcommand{\calH}{\mathcal{H}}
\newcommand{\calX}{\mathcal{X}}
\newcommand{\calY}{\mathcal{Y}}
\newcommand{\calC}{\mathcal{C}}
\newcommand{\calE}{\mathcal{E}}
\newcommand{\Pac}{\mathcal{P}_{ac}}
\newcommand{\T}{\top}
\newcommand{\norm}[1]{\left\lVert#1\right\rVert}
\newcommand{\enormsq}[1]{\Vert#1\Vert_2^2}
\newcommand{\hphiR}{\widehat{\varphi}^R}
\newcommand{\hL}{\widehat{\mathcal{L}}}
\newcommand{\hTR}{\widehat{T}^R}
\newcommand{\cX}{\mathcal{X}}
\newcommand{\cY}{\mathcal{Y}}
\newcommand{\cC}{\mathcal{C}}
\newcommand{\cE}{\mathcal{E}}
\newcommand{\cF}{\mathcal{F}}
\newcommand{\cT}{\mathcal{T}}
\newcommand{\cH}{\mathcal{H}}
\newcommand{\cG}{\mathcal{G}}
\newcommand{\cR}{\mathcal{R}}
\newcommand{\cL}{\mathcal{L}}
\newcommand{\cP}{\mathcal{P}}
\newcommand{\E}[1]{\underset{\scriptscriptstyle #1}{\bbE}}
\newcommand{\EXY}{\underset{\scriptscriptstyle X, Y}{\bbE}}
\newcommand{\lpnorm}[1]{\norm{#1}_{L^2(p)}}
\DeclareMathOperator*{\argmax}{arg\,max}
\DeclareMathOperator*{\argmin}{arg\,min}
\newcommand{\eqdef}{\stackrel{\text{\normalfont{def}}}{=}}
\renewcommand{\L}{\mathcal L}
\let\ol\overline
\renewcommand{\phi}{\varphi}
\let\hat\widehat %
\newcommand{\projY}{\text{\normalfont{proj}}_{\cY}}
\newcounter{mylabelcounter}
\newcommand{\labelText}[2]{%
#1\refstepcounter{mylabelcounter}%
\immediate\write\@auxout{%
  \string\newlabel{#2}{{1}{\thepage}{{\unexpanded{#1}}}{mylabelcounter.\number\value{mylabelcounter}}{}}%
}%
}
\title{A Statistical Learning Perspective on \\Semi-dual Adversarial Neural Optimal \\Transport Solvers}
\author{Roman Tarasov$^1$, Petr Mokrov$^1$, 
Milena Gazdieva$^{1, 2}$, Evgeny Burnaev$^{1, 2}$, Alexander Korotin$^{1, 2}$ \\
$\,\!^{1}$Applied AI Institute, \textit{Moscow, Russia}\\
$\,\!^{2}$AXXX, \textit{Moscow, Russia} \\
\texttt{tarasov.rb@yandex.ru} \\
}
\begin{document}

\maketitle

\begin{abstract}
Neural network-based optimal transport (OT) is a recent and fruitful direction in the generative modeling community. It finds its applications in various fields such as domain translation, image super-resolution, computational biology and others. Among the existing OT approaches, of considerable interest are adversarial minimax solvers based on semi-dual formulations of OT problems. While promising, these methods lack theoretical investigation from a statistical learning perspective. Our work fills this gap by establishing upper bounds on the generalization error of an approximate OT map recovered by the minimax quadratic OT solver. Importantly, the bounds we derive depend solely on some standard statistical and mathematical properties of the considered functional classes (neural nets). While our analysis focuses on the quadratic OT, we believe that similar bounds could be derived for general OT case, paving the promising direction for future research. Our experimental illustrations are available online \url{https://github.com/milenagazdieva/StatOT}.
\end{abstract}

\vspace{-4mm}\section{Introduction}
\label{intro}
In recent years, there has been a boom in the development of computational Optimal Transport (OT) which has been identified as a powerful tool of solving various machine learning problems, e.g., biological data transfer \citep{bunne2023learning,koshizukaneural,vargas2021solving}, image generation \citep{wang2021deep,de2021diffusion,chenlikelihood} and domain translation \citep{xie2019scalable,fan2023neural} tasks. The first works in the OT field had proposed methods for solving OT problems between discrete distributions \citep{cuturi2013sinkhorn, peyre2019computational}. The next milestone was the emergence of OT-based neural methods using the OT cost as a loss function for updating the generator in generative models \citep{sanjabi2018convergence,gulrajani2017improved, petzka2018regularization, liu2019wasserstein}. Only recently, the ML community has experienced an explosion of interest in developing scalable neural methods which compute OT map (or plan) and use it directly as the generative model, see, e.g., \citep{rout,daniels2021score}. Methods of this kind are usually attributed as \textit{continuous OT solvers} and typically have minimax optimization objectives based on semi-dual formulations of OT problems. 

Although a large number of neural network based \textit{minimax} OT solvers have already been developed, e.g., \citep{fan2023neural,not,knot,genconddist, enot, domain}, there is a relatively limited understanding of the theoretical aspects of their losses. Existing theoretical investigations \citep{makkuva, rout, fan2023neural,gnot,barycenters} do not reveal practical convergence guarantees, i.e., statistical rates, of these solvers. In contrast, several works on \textit{non-minimax} OT solvers \citep{gunsilius2022convergence,hutter2021minimax} develop statistical bounds for their solutions. However, the analysis of minimax solvers is significantly more complex as their optimization objective includes two variables, OT map and separate dual variable, while non-minimax approaches just operate with only one variable.

\textbf{Contributions.} In our paper, we fill the aforementioned gap by conducting a thorough theoretical investigation of minimax quadratic OT solvers. \textit{First}, we show that for the minimax OT solvers, the \textit{generalization error}, i.e., the difference between true and approximate OT maps, can be upper bounded by the sum of estimation and approximation errors (Theorem \ref{thm:decomp}). \textit{Second}, we specify the bounds for the estimation errors by showing that they depend only on the Rademacher complexities of the classes of the neural nets used (Theorem \ref{thm:estim_error}). This result highlights the importance of \textit{consistent estimation} theory in the context of minimax OT solvers. \textit{Third}, we reveal the \textit{approximation ability} of OT solvers, i.e., show that the approximation errors can be made arbitrarily small by choosing the appropriate classes of neural nets (Theorems \ref{thm:approx_inner}, \ref{thm:approx_outer}). As a result, we establish the \textit{learnability guarantees} for minimax OT solvers by showing that their generalization errors also can be made arbitrarily small by choosing the sufficient number of samples and appropriate classes of neural nets (Theorem \ref{thm:main}, Corollary \ref{corollary-rademacher}). {\color{black}We support our theoretical results with an illustrative experiment, \S \ref{sec-experiments}.}

\textbf{Notations.}
In the paper, $\calX, \calY \!\subset \!\bbR^D$ are convex compact subsets; continuous scalar-valued functions on $\calX$ are $\calC(\calX)$; absolutely continuous prob. distributions on $\calX$ are $\Pac(\calX)$. We use the same symbol ``$p$'' to denote a continuous distribution $p \!\in \!\Pac(\calX)$ and its density $p(x)$ at some point $x \!\in\! \calX$. For a measurable map $T\! :\! \calX\!\! \rightarrow\!\! \calY$, we use $T_{\#}$ to denote the associated push-forward operator. Throughout the text, $\Vert \cdot \Vert_{L^2(p)}$ denotes the $L^2$ norm w.r.t probability measure $p$, i.e., for $T_1, T_2 \!: \cX \!\rightarrow\! \cY$: 
\vspace{-1mm}
\begin{align*}
    \Vert T_1 - T_2\Vert_{L^2(p)}^2 = \int_{\cX} \Vert T_1(x) - T_2(x) \Vert_2^2 p(x) dx.
\end{align*}
For a function $\phi \in \calC(\calY)$, its convex conjugate is $\ol{\phi} : \cX \rightarrow \bbR$; ${\ol{\phi}(x) \eqdef \max_{y\in\calY} \{ \langle x, y\rangle - \phi(y)\}}$ (similarly for functions supported on $\cX$). Note that convex conjugate is always a convex function.

\vspace{-2mm}\section{Background and object of study}
\label{background}

In this section, we provide key concepts of OT theory that are used in our paper, a comprehensive introduction could be found in \citep{santambrogio, villani2008optimal}. Throughout the paper, $p \in \Pac(\calX)$ and $q \in \Pac(\calY)$ are absolute continuous source and target distributions. 

\textbf{Monge's OT problem.} Given $p, q$ and a continuous cost function $c : \calX \times \calY \rightarrow \bbR$, Monge's primal OT prescribes to find a measurable transport map $T : \mathcal{X} \rightarrow \mathcal{Y}$ which minimizes OT cost:
\vspace{-1mm}
\begin{equation}\label{primal}
  \mathrm{Cost}_{c}(p, q) \eqdef \inf_{T_{\#}p=q} \int_{\mathcal X} c(x,T(x))p(x)dx.
\end{equation}
In other words, we want to find a map $T$ that transports probability mass from $p$ to $q$ in the cheapest way with respect to the given cost function $c(x, y)$, see Fig. \ref{fig:ot-map-def}. I.e., the average transport expenses when moving $p$ to $q$ should be minimal. 

\begin{wrapfigure}{r}{0.4\textwidth}
  \vspace{-8.5mm}\begin{center}
    \includegraphics[width=\linewidth]{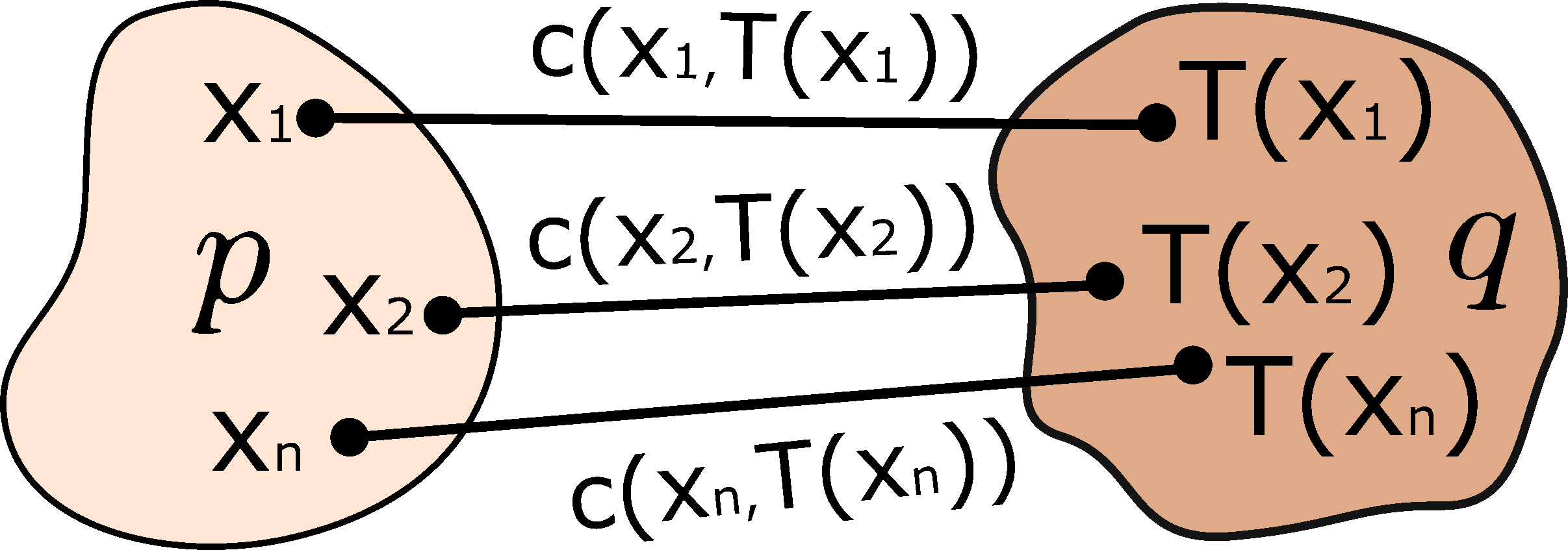}
  \end{center}
  \vspace{-3.5mm}
  \caption{\centering Monge's formulation of optimal transport.}
  \label{fig:ot-map-def}\vspace{1mm}
\end{wrapfigure}
\textbf{Wasserstein-2 distance.} A popular example of an OT cost is the (squared) Wasserstein-2 distance ($\mathbb W^2_2$), for which the cost function is quadratic, i.e., $c(x,y)= \frac{1}{2}\|x-y\|^2_2$. In this case, the corresponding OT objective \eqref{primal} is known to permit the unique minimizer $T^*$.

\textbf{Semi-dual OT problem for quadratic cost}. Primal OT problem \eqref{primal} with quadratic cost $c(x, y) = \frac{1}{2}\Vert x - y\Vert^2_2$ has a tight dual counterpart\footnote{Strictly speaking, \eqref{dual} appears as the dual for \textit{Kantorovich's} relaxation of \eqref{primal}, \citep[\S 1.1]{santambrogio}. However, \eqref{primal} = \eqref{dual} still holds (for the quadratic cost, $c(x, y) \!=\! \frac{1}{2}\Vert x \!-\! y \Vert_2^2$), see \citep[Thm. 1.33]{santambrogio}} \citep[Thm. 1.39]{santambrogio}:
\begin{align}\label{dual}
    \mathrm{Cost}_{\frac{1}{2}\Vert \cdot - \cdot \Vert^2_2}(p, q) \!=\!\! \int_{\calX}\!\!\!\frac{\Vert x \Vert^2_2}{2} p(x) dx \!+\! \int_{\calY}\!\!\! \frac{\Vert y \Vert^2_2}{2} q(y) dy - \!\!\!\min_{\phi \in \calC(\calY)} \!\Big\{ \int_{\calX} \!\!\overline{\phi}(x) p(x) dx + \int_{\calY} \!\!\phi(y) q(y) dy \Big\},
\end{align}
where $\ol{\phi}$ is the convex conjugate of a potential $\phi \in \cC(\cY)$. Potential $\phi^*$ which delivers maximum to \eqref{dual} is called \textit{Kantorovich} potential. Removing in \eqref{dual} first two terms that do not depend on $\phi$, we get an equivalent formulation:
\begin{equation}\label{dualconvex}
    \L(\phi)\eqdef\min_{\phi\in\mathcal C(\mathcal Y)}  \int_{\mathcal{X}} \overline {\phi}(x)p(x)dx + \int_{\mathcal{Y}} \phi(y)q(y)dy.
\end{equation}
The optimal potential $\phi^*$ which solves \eqref{dualconvex} is known to be convex (\citep[Proposition 1.21]{santambrogio}). Therefore, we can optimize \eqref{dualconvex} with respect to the set of the continuous convex potentials $\phi \in \text{Cvx}(\calY) \subset \calC(\calY)$:

\begin{equation}\label{dualconvexcvx}
    \L(\phi)=\min_{\phi\in\text{Cvx}(\mathcal Y)}  \int_{\mathcal{X}} \overline {\phi}(x)p(x)dx + \int_{\mathcal{Y}} \phi(y)q(y)dy.
\end{equation}
Furthermore, we can recover the OT map $T^*$ which solves \eqref{primal} for quadratic (scalar product) cost from optimal dual potential 
$T^*(x) \!=\! \nabla \overline{\phi^*}(x)$ \citep[Thm. 1.17]{santambrogio}.

\textbf{Continuous OT problem.} Analytical solutions for problems like \eqref{dual} are, in general, not known. In real-world scenarios, the measures $p, q$ are typically not available explicitly but only through their empirical samples $X=\{x_1, \dots, x_N \} \sim p$, $Y=\{y_1, \dots, y_M\} \sim q$. To approximate the desired solution for primal (dual) OT, two setups are possible. In the first setup called \textbf{discrete} \citep{peyre2019computational}, one aims to establish optimal matching (point-to-point correspondence) between the empirical distributions $\hat p=\frac{1}{N}\sum_{n=1}^N\delta_{x_n}$, $\hat q=\frac{1}{M}\sum_{m = 1}^M\delta_{y_m}$. In turn, our study focuses on the second \textbf{continuous} setup. Under this setup, the aim is to recover some parametric approximation $\widehat{T}$ of OT map $T^*$ based on available training samples from the reference distributions $p, q$. Compared to discrete OT, continuous OT allows easy estimation of the recovered map $\widehat{T}(x^{\text{new}})$ for new (unseen) samples $x^{\text{new}} \sim p$, see Fig. \ref{fig:ot-cont-setup}. 
In contrast, discrete OT solvers perform matching between the empirical distributions and, \textit{by default}, do not provide \textit{out-of-sample} estimation of the transport map. Naive interpolation techniques (neural network regression on top of the discrete OT solutions) are known to provide unsatisfactory results, see \citep[Appendix D]{domain}, for the illustration of this issue. While several recent papers \citep{klein2024genot,deb2021rates,manole2024plugin,pooladian2021entropic} propose more advanced interpolation techniques and develop out-of-sample estimators, 
their applicability in high dimensions requires a separate study.

\begin{wrapfigure}{r}{0.55\textwidth}
  \vspace{-5.5mm}\begin{center}
    \includegraphics[width=\linewidth]{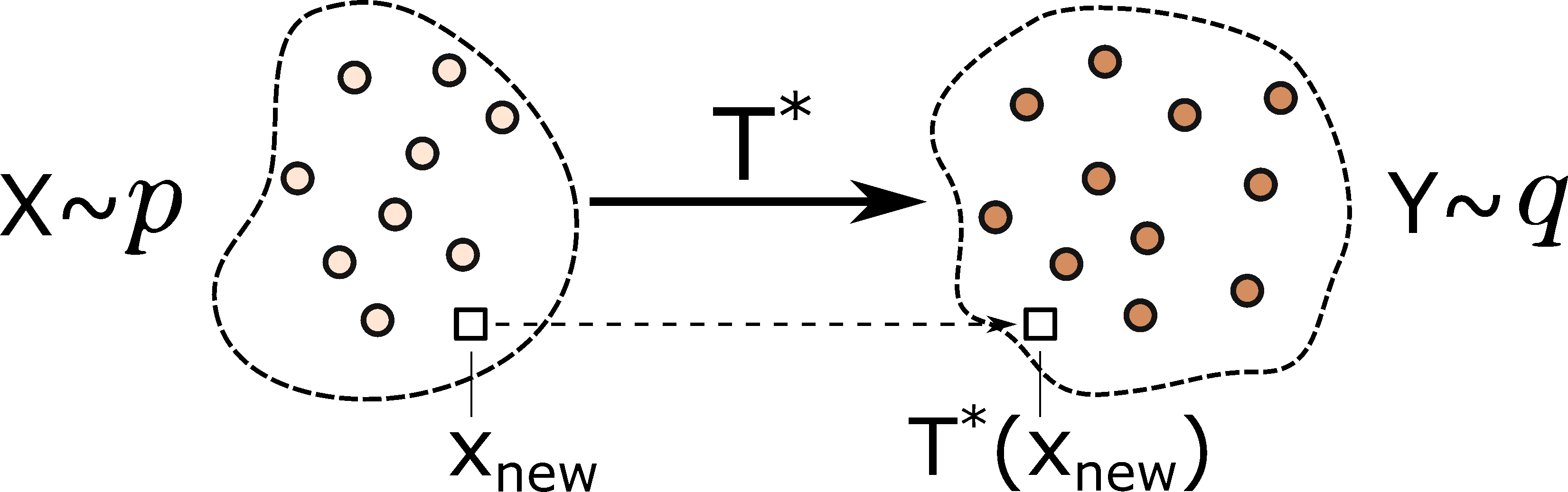}
  \end{center}
  \vspace{-3mm}
  \caption{\centering Continuous setup of OT problem.}
  \label{fig:ot-cont-setup}\vspace{-2mm}
\end{wrapfigure}
\textbf{Continuous OT solvers for quadratic cost.} 
In recent years, numerous algorithms have been developed to solve the continuous optimal transport problem for the quadratic cost. The primal problem is challenging to solve due to the difficulty in satisfying the constraint \( T_\# p = q \). For this reason, semi-dual problem \eqref{dualconvex} or \eqref{dualconvexcvx} is solved instead and optimal transport map $T$ is recovered from the optimal dual potential. 
One popular strategy \citep{taghvaei} is to consider \eqref{dualconvexcvx} and parameterize learned $\phi$ as \textit{input convex neural network} (ICNN) \citep{amos2017input}. ICNNs are neural networks which place certain restrictions on their weights and activations to ensure the convexity w.r.t. the input. Given proper parametrization of dual potential $\phi_{\theta}, \theta \in \Theta$, one can directly minimize:
\begin{equation*}
    \L(\phi_{\theta})=\int_{\mathcal{X}} \overline {\phi_{\theta}}(x)p(x)dx + \int_{\mathcal{Y}} \phi_{\theta}(y)q(y)dy.
\end{equation*}
or its minibatch (Monte-Carlo estimate) version. 
Such methods, however, require additional optimization steps to compute $\overline{\phi_{\theta}}$, which may be undesirable from the computational efficiency perspectives.
To overcome the issue, researchers have explored ways to simplify the task. Several studies propose methods that approximate the conjugate potential using another neural network \citep{makkuva,w2gn, amos2023on}. Such methods, however, may suffer from \textit{gradient deviation} issue \citep[\S 2]{korotin2021neural} and fail to recover OT mappings sufficiently well. 

\textbf{Continuous minimax OT solvers for quadratic cost.} Recently, minimax semi-dual approaches have been actively explored, they learn both the dual potential and the primal transport map. This formulation can be derived by applying the interchange theorem \citep[Theorem 3A]{rockafellar}:  
\begin{align}
    \int_{\calX} \overline{\phi}(x) p(x) dx &= 
    \int_{\mathcal{X}} \max_{y\in\mathcal Y}\{ \langle x, y \rangle - \phi(y)\}p(x)dx = \max_{T} \int_{\mathcal{X}} [\langle x, T(x) \rangle - \phi(T(x))]p(x)dx \label{eq:rock-interchange},
\end{align}
the outer $\max_{T}$ is taken w.r.t. measurable maps ${T: \calX \rightarrow \calY}$. 

\labelText{\ul{\textit{Remark}}}{remark-T-is-argmax}. The existence of a map which maximizes the above problem follows from the measurable selection theory. For $\phi \in \calC(\calY)$, the sets $\argmax_{y} \{\langle x, y \rangle - \phi(y)\}$, $x \in \calX$ are nonempty and closed. From \citep[Thm. 8.1.3]{aubin2009set}, it follows that there exists a measurable map $T$ with values in such sets. This map delivers maximum to \eqref{eq:rock-interchange}.

In light of \eqref{eq:rock-interchange}, problem \eqref{dualconvex} is thus reformulated as a minimax optimization problem:
\begin{equation}
    {\color{black}\text{\eqref{dualconvex}} = \min_{\phi}\max_{T}\bigg\lbrace\!\!\int_{\mathcal X}\!\![\langle x,T(x) \rangle \!-\! \phi(T(x))]p(x)dx \!+ \!\!\int_{\mathcal Y}\!\! \phi(y)q(y)dy\bigg\rbrace \eqdef \min_{\phi}\max_{T} \L(\phi, T).}
    \label{minimax}
\end{equation}
Under certain assumptions one may guarantee that if the values $\L(\phi, T)$ and $\L(\phi^*,
T^*)$ are close, then $T$ is close to OT map $T^*$. And it is our work which establishes \textbf{practical convergence guarantees} for \eqref{minimax} under continuous OT setup with $\phi, T$ given by neural networks, see \S \ref{sec:results}.

Importantly, the use of minimax semi-dual OT losses like \eqref{minimax} is an emerging direction in generative modeling  \citep{korotin2021neural, rout, gazdieva2025optimal, knot, not, fan2023neural, domain, choi, choi2024analyzing, enot, gnot, genconddist, carrasco2024uncovering, barycenters}, making the analysis of \eqref{minimax} highly beneficial and desirable to the community. In particular, some of the minimax OT works, e.g., \citep{korotin2021neural, rout}, explore quadratic (scalar product) cost functions, i.e., deal exactly with  \eqref{minimax}. The others consider more general cost functions \citep{fan2023neural} and OT formulations \citep{gnot}. Extending the results of our study for such setups is a fruitful direction of future research.

\textbf{Object of study.} In practice, $T$ and $\phi$ are parametrized as neural networks $T_{\omega},\, \omega \in \Omega$ and $\phi_{\theta},\, \theta \in \Theta$. Besides, following our continuous OT setup, the (unknown) reference distributions $p, q$ are replaced by their empirical counterparts $\widehat{p}, \widehat{q}$. This yields the optimization problem typically solved in practice:
\begin{equation}
    {\color{black}\min_{\theta\in\Theta}\max_{\omega\in\Omega} \sum_{n=1}^N \!\frac{\langle x_n,\!T_{\omega}(x_n)\!\rangle \!-\! \phi_{\theta}\big(T_{\omega}(x_n)\!\big)}{N}\! + \!\!\sum_{m=1}^M\!\! \frac{\phi_{\theta}(y_m)}{M}\eqdef \min_{\theta\in\Theta}\max_{\omega\in\Omega} \hat \L(\phi_{\theta},T_{\omega}\!).}
    \label{minimax_emp}
\end{equation}
ML practitioners and researchers naturally wonder how different are the practical map $\widehat{T}_{\omega^*}$ that solves \eqref{minimax_emp} and OT map $T^*$. Formally speaking, the problem is to estimate the \textit{generalization error}
\begin{align}
    \EXY \norm{ T^* - \hat{T}_{\omega^*}}_{L^2(p)}^2,\label{eq-obj-of-study}
\end{align}
where the expectation is taken w.r.t. empirical training samples $X \sim p, Y\sim q$. Remarkably, the presence of the error \eqref{eq-obj-of-study} comes from two sources: a) reliance on the restricted families of optimized potentials, maps (\textit{approximation error}); b) usage of empirical measures $\widehat{p}, \widehat{q}$ instead of original ones (\textit{estimation error}). 
Establishing the particular bounds for \eqref{eq-obj-of-study} {\color{black}is our primal focus, see \S \ref{sec:results}.}%

\section{Related Works}\vspace{-1mm}
\label{sec:works}

In this section, we discuss the existing works which conduct theoretical analysis of semi-dual OT losses. The current progress in this field could be subdivided into two branches reviewed below.

\textbf{Non-minimax semi-dual OT.} The first branch of works \citep{hutter2021minimax, gunsilius2022convergence} analyze the non-minimax losses \eqref{dualconvex} and \eqref{dualconvexcvx} and develop error bounds \eqref{eq-obj-of-study} for pushforward maps $T$ given by the gradient of dual potentials, i.e., $\EXY \Vert \nabla \overline{\phi}^* - \nabla \widehat{\overline{\phi}}_{\omega} \Vert^2_{L^2(p)}$. To achieve particular statistical rates, the authors place certain restrictions on the considered problem setup and deal with specific classes of functions (maps), e.g., wavelet expansions or (kernel-smoothed) plug-in estimators. Recent studies in this direction \citep{divol2025optimal, ding2024statistical} extend the analysis of error rates for the class of potentials $\phi$ given by neural networks, e.g., input convex neural networks. Meanwhile, none of the mentioned works treats the learned map $T$ separately from dual potential $\phi$. Importantly, the analysis of minimax objective \eqref{minimax} is considerably more challenging than \eqref{dualconvex} due to the additional ``degree of freedom'' given by the optimized map $T$. Furthermore, saddle-point problems such as \eqref{minimax} are known to be more tricky for theoretical investigation than usual minimization. At the same time, the practical demand stemming from the recent proliferation of minimax-based OT solvers, see \S \ref{background}, makes such an investigation highly desirable. All of this necessitates separate study of minimax OT solvers; the adaptation of existing non-minimax results is questionable, if that is even possible.

\textbf{Minimax semi-dual OT.} The estimation of \eqref{eq-obj-of-study} in case of minimax OT \eqref{minimax_emp} is a much less explored task. In fact, there are no studies at all that provide statistical bounds for recovered minimax OT map $\widehat{T}_\omega$. The existing works \citep{makkuva, rout} only conduct an analysis of $\Vert T^* - \widehat{T}_{\omega} \Vert^2_{L_2(p)}$ in the sense of \textit{duality gaps}. That is, the error between the true OT map and recovered map is upper-bounded by values of functional $L(\phi, T)$. The duality gaps analysis helps to validate minimax methodology; it is common for more broad class of (non-quadratic) minimax OT solvers \citep{fan2023neural, gnot, barycenters}. However, we again emphasize that it does not reveal particular statistical rates and guarantees for \eqref{eq-obj-of-study}. In \citep{gonzalez2022gan}, the authors establish promising results for the error of recovered approximate OT map. However, their problem formulation of minimax OT differs from ours, eq. \eqref{minimax}, as they aim to treat OT as the limit of regularized GAN objectives. Finally, the work \citep{nietert2025estimation} conducts theoretical analysis of the error of solving the minimax Wasserstein distributionally robust optimization problem which again differs from ours.

\textbf{Theoretical analysis of other OT formulations.} For the completeness of exposition, we also mention several studies that develop statistical learning analysis of recovered OT map (plan) error for non semi-dual OT formulations. The works \citep{genevay2019sample, rigollet2022sample, gonzalez2023weak, mokrov2024energyguided, goldfeld2024limit, korotin2024light,kolesov2024energy} deal with Entropy-regularized OT \citep{gushchin2023building}; \citep{vacher2022stability, vacher2023semi, gazdieva2024light} investigate unbalanced OT versions; \citep{benamou2000computational} considers dynamic OT.
Although these works are interesting and insightful, their object of study is different from ours, making them less relevant.

Since the minimax objective \eqref{minimax} resembles adversarial training, we additionally discuss the literature addressing the \textit{\underline{theoretical aspects}} of Generative Adversarial Neural Networks (GAN) losses and their \underline{\textit{relation}} to semi-dual OT losses in Appendix \ref{subsec:works-gans}. For a comprehensive analysis of the connections and differences between these losses, we refer the reader to \citep{gazdieva2025optimal}.

\section{{\color{black}Theoretical} Results}\label{sec:results}

In real-world use cases, OT practitioners are given (i.i.d.) training samples $X \!\sim\! p$ and $Y\!\sim\! q$ and optimize empirical minimax objective \eqref{minimax_emp} with respect to restricted classes of functions $\phi \in \cF$, $T \in \cT$, e.g., neural networks. Below we denote the solutions of the problem which we have in practice {\color{black}(superscript ``R'' stands for ``Restricted'')}:
\vspace*{0mm}\begin{align*}
& \hat {\phi}^{R}=\underset{{\phi} \in \mathcal F}{\operatorname{argmin}} \max_{T \in \mathcal T} \hat \L({\phi}, T); \hspace{5mm}\hat T^{R}=\underset{T \in \mathcal T}{\operatorname{argmax}} \,\hat{\L}\left(\hat {\phi}^{R}, T\right).
\end{align*}
Note that in these equations we implicitly assume the \textit{existence} of optimizers $\hphiR \in \cF, \hTR \in \cT$. While in general this may not always hold true, some natural practical choices of $\cF$ and $\cT$, e.g., neural network architectures $\phi_\theta$, $T_{w}$ with bounded set of parameters $\Theta, \Omega$ guarantee the existence. 

Our goal is to estimate the \textit{generalization error}, i.e., the (average) error between OT map $T^*$ and empirical map $\hTR$:
\vspace{-5mm}\begin{align}
{\EXY \big\Vert \hTR-T^*\big\Vert_{L^2(p)}}. \label{eq-error}
\end{align}
{\color{black}We subdivide the problem into several steps. 
\ul{First}, we upper-bound the error using differences in the functional $\L(\phi, T)$ values (\S \ref{subsec:err-decomp}). The obtained upper bound decomposes into several terms: \textit{estimation} and \textit{approximation} errors that occur in both the \textit{inner} and \textit{outer} optimization problems within our minimax objective \eqref{minimax}. 
\ul{Second}, we estimate each term individually using suitable techniques from statistical learning theory (\S \ref{subsec:approx-estim}).
\ul{Finally}, we bring it all together and formulate our main theoretical result (\S \ref{subsec:main-res}).}
{\color{black}We provide the \textit{\underline{proofs}} of all theoretical results in Appendix \ref{app-proofs}.}

\subsection{Error decomposition}\label{subsec:err-decomp}

{\color{black}We start with introducing the components} which will appear when upper-bounding \eqref{eq-error}. %
The primary quantity which we analyze in this section is the \textit{error} between values of functional $\L$, i.e.:
\begin{align}
    \L(\phi, T) - \L(\phi', T'). \label{L-error}
\end{align}
Depending on the context, the plug-in arguments $\phi, \phi', T, T'$ of the expression above may be ``optimal'' in some sense and connected to each other. In particular, the potentials $\phi, \phi'$ may solve the outer ($\min_{\phi}$) optimization for the original \eqref{minimax} or empirical \eqref{minimax_emp} objective in certain class of functions, i.e., $\cC(\cY)$ or $\cF$. In turn, the maps $T, T'$ can be optimal in the inner ($\max_T$) optimization of the minimax objectives for certain potentials $\phi$ and also belong to certain classes of functions. 
These choices give rise to several options, which are discussed below.

{\color{black}Typically, quantities (errors, bounds, etc.) appearing in statistical learning theory allow for decomposition into approximation and estimation components. We also treat error \eqref{L-error} in this framework.}

{\color{black}\textbf{Approximation error.}} This error arises because we are not optimizing over the full space of continuous functions $\phi \in \cC(\cY)$ and measurable maps $T$, but over the restricted classes $\calF$ and $\calT$ of neural networks.
Since neural networks have a limited capacity, they may not be able to represent the true {\color{black}saddle-point solution} $(\phi^*,T^*)$ of $\L(\phi,T)$ exactly. 
We consider two components of this error, focusing respectively on the outer and inner optimization in \eqref{minimax}.

{\color{black}The outer component assumes that the inner maximization ($\max_T$) is solved exactly. }We want to identify to which extent the restricted outer minimization ($\min_{\phi \in \cF}$) affects the minimax objective:
\vspace*{-0mm}\begin{equation}
    \mathcal{E}_{Out}^{A}(\mathcal F) \eqdef \big\vert \min_{\phi\in\calF}\max_T \L(\phi,T) - \min_{\phi}\max_T \L(\phi,T) \big\vert. \nonumber%
\end{equation}
Note that $\max_T \L(\phi,T) \!\!=\!\! \L(\phi)$. {\color{black}Then the outer approximation error could be reformulated as:}
\vspace*{-0mm}\begin{equation}\label{def:e_a_out}
    \mathcal{E}_{Out}^{A}(\mathcal F) =  \big\vert \min_{\phi\in\mathcal F}\L({\phi}) - \min_{\phi}\L({\phi})\big\vert .
\end{equation}%
The inner approximation error, in turn, measures the looseness of inner maximization in the restricted class $\calT$, i.e., the gap $\vert\max_T\L(\phi, T) - \max_{T\in\calT}\L(\phi,T)\vert$. To have $\phi$-independent characteristic, we further bound the gap uniformly w.r.t. $\phi\in\calF$. This yields  
\vspace*{-1mm}\begin{equation}\label{def:e_a_in}
    \mathcal{E}_{In}^{A}(\mathcal F, \mathcal T) = \max_{{\phi}\in\mathcal F}\left\vert\max_{T} \L({\phi}, T) \!-\! \max_{T\in\mathcal T} \L({\phi}, T)\right\vert.%
\end{equation}
{\color{black}\textbf{Estimation error.}} In practice, we do not have direct access to $\L(\phi,T)$ but instead {\color{black}optimize} the empirical functional $\hat\L(\phi,T)$ based on finite training data. Thus, the optimal potential and map which deliver optimal value to $\hat{\L}$, i.e., $\hphiR$ and $\hTR$, may be different from those which deliver optimal value to $\L$, even within classes $\cF$, $\cT$.
This introduces \textit{estimation error}, which quantitatively measures the behaviour of functional $\L$ which takes ``empirically-optimal'' functions $\hphiR, \hTR$ as the input. By design, the estimation error is evaluated within classes $\cF$, $\cT$. Also, since $\hphiR, \hTR$ are (indirectly) based on the empirical samples $\{x_1, \dots x_N\} = X \sim p$ and $\{y_1, \dots, y_M\} = Y \sim q$, the estimation error is averaged w.r.t. them and depend on the sample sizes $N$, $M$.

The inner estimation error is defined as:
\begin{equation}\label{def:e_e_in}
    \mathcal{E}_{In}^{E}(\mathcal F, \mathcal T, N, M) \eqdef 
    \EXY
    \left\vert\max_{T\in\mathcal T} \L(\hat {\phi}^R, T)- \L(\hat {\phi}^R, \hat T^R)\right\vert.
\end{equation}
For the fixed potential $\hphiR$ it shows how different are the optimal value of the inner ($\max_{T\in\cT}$) optimization and the ``empirically-induced'' one $\L(\hphiR, \hTR)$.

Similarly, the outer estimation error deals with the outer minimization ($\min_{\phi\in\cF}$) while assuming exact solution of inner maximization in class $\cT$:
\begin{equation}\label{def:e_e_out}
    \mathcal{E}_{Out}^{E}(\mathcal F\!, \cT\!, N\!, M) \!=  \!\!\EXY\!
    \left\vert \max_{T\in\calT}\L({\hphiR}\!, T) \!-\! \min_{\phi\in\calF}\max_{T\in\calT}\L({\phi}, T) \right\vert.
\end{equation}
Now we are ready to formulate our important Theorem \ref{thm:decomp} about the error decomposition. 
\begin{theorem}[\normalfont{Error decomposition}]\label{thm:decomp}

Let $\calF$ be a class of $\beta$-strongly convex functions, then
\begin{align}
\begin{aligned}
    \!\!\!\EXY\left\|\hat T^{R}\!-\!T^*\right\|^2_{L_2(p)} \!\leq\! \frac{4}{\beta}\left(\mathcal{E}_{In}^{E}(\mathcal F, \mathcal T, N, M) \!+\! 3\mathcal{E}_{In}^{A}(\mathcal F, \mathcal T) \!+\! \mathcal{E}_{Out}^{E}(\mathcal F, \mathcal T, N, M) \!+\! \mathcal{E}_{Out}^{A}(\mathcal F) \right).
\end{aligned}
\end{align}%
\end{theorem}
The theorem shows that the (averaged) $L^2$ gap between the empirical and true OT maps \eqref{eq-error} could be upper-bounded by our introduced errors. Importantly, the latter compares the values of functional $\L$ and thus easier for the analysis (conducted in the next subsection \S \ref{subsec:approx-estim}) than the former. 

\subsection{Bounds on the Approximation and Estimation Errors}\label{subsec:approx-estim}

In this section, we establish the bounds on the estimation and approximation errors of the minimax OT solvers defined in \S \ref{subsec:err-decomp}. In the theorem below, we establish the bounds on the total estimation error, i.e., the sum of inner \eqref{def:e_e_in} and outer errors \eqref{def:e_e_out}. {\color{black}The bounds are given by Rademacher complexities of the certain classes of functions. Rademacher complexity is a standard tool in statistical learning theory; we recall its \uline{definition} in Appendix~\ref{proof:estim_error}.} 

\begin{theorem}[\normalfont{Rademacher Bound on the Estimation Error}]\label{thm:estim_error}

    Let $\mathcal{E}^{E} = \mathcal{E}_{In}^{E} + \mathcal{E}_{Out}^{E}$. Then
    \vspace{-1mm}\begin{align}\label{eq:estim}
    \begin{aligned}
        \mathcal{E}^{E} \leq 8\mathcal R_{p, N}(\mathcal H)+8\mathcal R_{q, M}(\mathcal F),
    \end{aligned}\vspace{-1mm}
    \end{align}
where $\mathcal H(\calF,\calT) \eqdef \{h: h(x) = \langle x,T(x) \rangle - \phi(T(x)),T\in\calT,\phi\in\calF\}$ and $\mathcal R_{p, N}(\mathcal H)$ is the Rademacher complexity of the function class 
$\mathcal H$ {\color{black}w.r.t.} probability density $p$ for sample size $N$.
\end{theorem}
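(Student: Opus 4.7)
The plan is to reduce both estimation errors to a single uniform deviation $\sup_{\phi \in \mathcal{F}, T \in \mathcal{T}} |\mathcal{L}(\phi, T) - \hat{\mathcal{L}}(\phi, T)|$, and then split this uniform deviation into a $p$-part depending only on $\mathcal{H}$ and a $q$-part depending only on $\mathcal{F}$, at which point the standard symmetrization inequality finishes the job.

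First, I would handle $\mathcal{E}_{In}^{E}$. Let $T^{R} \in \arg\max_{T \in \mathcal{T}} \mathcal{L}(\hat{\phi}^R, T)$ be the population-optimal map within $\mathcal{T}$ for the fixed potential $\hat{\phi}^R$. Inserting and subtracting $\hat{\mathcal{L}}(\hat{\phi}^R, T^R)$ and $\hat{\mathcal{L}}(\hat{\phi}^R, \hat{T}^R)$, and using the fact that $\hat{T}^R$ maximizes $\hat{\mathcal{L}}(\hat{\phi}^R, \cdot)$ over $\mathcal{T}$ so that $\hat{\mathcal{L}}(\hat{\phi}^R, T^R) - \hat{\mathcal{L}}(\hat{\phi}^R, \hat{T}^R) \leq 0$, one obtains
\[
\mathcal{L}(\hat{\phi}^R, T^R) - \mathcal{L}(\hat{\phi}^R, \hat{T}^R) \,\leq\, 2\sup_{T \in \mathcal{T}}\bigl|\mathcal{L}(\hat{\phi}^R, T) - \hat{\mathcal{L}}(\hat{\phi}^R, T)\bigr|.
\]
Next, for $\mathcal{E}_{Out}^{E}$, define $g(\phi) := \max_{T \in \mathcal{T}} \mathcal{L}(\phi, T)$ and $\hat{g}(\phi) := \max_{T \in \mathcal{T}} \hat{\mathcal{L}}(\phi, T)$. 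An analogous add-and-subtract around $\hat{g}(\hat{\phi}^R)$ and $\hat{g}(\phi^R)$ (with $\phi^R$ the population minimizer of $g$ over $\mathcal{F}$) together with the optimality of $\hat{\phi}^R$ for $\hat{g}$ gives $g(\hat{\phi}^R) - g(\phi^R) \leq 2 \sup_{\phi \in \mathcal{F}} |g(\phi) - \hat{g}(\phi)|$, and the elementary inequality $|\max f - \max h| \leq \sup |f - h|$ yields $\sup_{\phi} |g - \hat{g}| \leq \sup_{\phi, T} |\mathcal{L} - \hat{\mathcal{L}}|$. Adding the two contributions,
\[
\mathcal{E}^{E} \,\leq\, 4 \, \EXY \sup_{\phi \in \mathcal{F},\, T \in \mathcal{T}} \bigl|\mathcal{L}(\phi, T) - \hat{\mathcal{L}}(\phi, T)\bigr|.
\]

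Second, I would split the integrand of $\mathcal{L}$ along the two marginals. By definition $\mathcal{L}(\phi, T) = \mathbb{E}_{x \sim p}[h_{\phi,T}(x)] + \mathbb{E}_{y \sim q}[\phi(y)]$ with $h_{\phi,T}(x) = \langle x, T(x)\rangle - \phi(T(x)) \in \mathcal{H}$, and $\hat{\mathcal{L}}$ replaces the expectations by the empirical averages over $X$ and $Y$. A triangle inequality separates the supremum into
\[
\sup_{h \in \mathcal{H}}\bigl|\mathbb{E}_p h - \tfrac{1}{N}\textstyle\sum_n h(x_n)\bigr| \,+\, \sup_{\phi \in \mathcal{F}}\bigl|\mathbb{E}_q \phi - \tfrac{1}{M}\textstyle\sum_m \phi(y_m)\bigr|,
\]
where the two suprema are over independent sample sets, so I can take $\EXY$ termwise.

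Finally, I would apply the textbook symmetrization argument (ghost sample plus Rademacher variables) to each term; this contributes the standard factor of $2$, so that the two expectations are bounded by $2\mathcal{R}_{p,N}(\mathcal{H})$ and $2\mathcal{R}_{q,M}(\mathcal{F})$ respectively. Combined with the factor $4$ from the first step this delivers the advertised constant $8$ on both Rademacher complexities.

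The main obstacle is largely bookkeeping rather than a single hard idea: one must be careful that the data-dependent quantities $\hat{\phi}^R$ and $\hat{T}^R$ are absorbed into a uniform bound over $\mathcal{F}$ and $\mathcal{T}$ before taking expectations (so that $\EXY$ can be moved inside the supremum), and one must verify that the function class $\mathcal{H}$ is well-defined and measurable enough for symmetrization to apply; the convexity assumption on $\mathcal{F}$ is not needed here, only the fact that $\hat{\phi}^R, \hat{T}^R$ are empirical saddle points within $\mathcal{F} \times \mathcal{T}$.
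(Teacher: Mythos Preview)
Your proposal is correct and follows essentially the same route as the paper: both reduce $\mathcal{E}^{E}$ to $4\,\EXY\sup_{\phi\in\mathcal F,\,T\in\mathcal T}|\mathcal L(\phi,T)-\hat{\mathcal L}(\phi,T)|$, split the integrand along the $p$- and $q$-marginals into representativeness terms over $\mathcal H$ and $\mathcal F$, and then apply symmetrization for the extra factor of $2$. The only cosmetic difference is that you obtain each factor of $2$ via the classical ERM sandwich (drop the nonpositive middle term using optimality of $\hat T^R$, resp.\ $\hat\phi^R$), whereas the paper inserts $\hat{\mathcal L}(\hat\phi^R,\hat T^R)$ by triangle inequality and then invokes the elementary bound $|\inf A-\inf B|\le \sup|A-B|$; these are equivalent standard maneuvers and yield identical constants.
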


Now we proceed to the approximation errors \eqref{def:e_a_in} and \eqref{def:e_a_out}. In what follows, we prove that under proper choice of classes $\mathcal{F}, \mathcal T$ the errors could be done arbitrarily small. 

First we look at the inner approximation error \eqref{def:e_a_in}. Note that it depends on both classes $\mathcal{F}$ and $ \mathcal T$.
The following theorem states that by imposing certain restrictions on class $\calF$ and by choosing an appropriate class of neural networks $\calT$, we can control the inner approximation error. 

\begin{theorem}[\normalfont{Inner approximation error}]\label{thm:approx_inner}

Let $\mathcal{F} \subset \cC(\cY)$ be a class of Lipschitz $\beta$-strongly convex functions, totally bounded with respect to the Lipschitz norm. Then for all $\varepsilon > 0$ there exists a class of neural networks $\mathcal{T}_{ub} = \calT( \varepsilon, \cF)$ and their Im-\textit{projected} versions $\calT \!=\! \{ \projY\!\circ\! T, T \in \cT_{ub}\}$ s.t.:
\begin{equation}
\mathcal{E}_{In}^{A}(\mathcal F, \calT) < \varepsilon; \label{eq:err-in-a-est}%
\end{equation}
$\projY : \bbR^D \rightarrow \cY$ is the projection operator, see Lemma \ref{lemma_compact_proj} in Appendix \ref{proof:approx_inner}  for the \ul{formal definition}.%
\end{theorem}

Importantly, the projection operator is not a restrictive add-on; it is usually used in practice, for example when working in the image domain (projection to pixel space). Our following Proposition \ref{prop:celu_icnns_compact} elaborates on the functional class $\cF$ appearing in the theorem above.

\vspace{1mm}
\begin{proposition}[\normalfont{Class $\cF$ in practice}]\label{prop:celu_icnns_compact}
    Let $\calF_{icnn}$ be a class of $K$-layer ICNN with {\color{black}CELU} activations, bounded width, height; bounded weights matrices ${\norm{a_k}_2\!\leq\! A,\ \!k\!=\!1,\dots, K}$; defined in proof of \citep[Thm. 1]{icnn_approx}. 
    Then $\calF \!\!=\!\! \{ \phi \!+\! \beta\frac{\|.\|^2_2}{2}, \phi\!\in\!\calF_{icnn} \!\}$ satisfies the requirements of Theorem \ref{thm:approx_inner}, i.e., $\calF$ is a class of $\beta$-strongly convex functions totally bounded w.r.t. \!the Lipschitz norm.
\end{proposition}

From the proof of Theorem \ref{thm:approx_inner} one can notice that the class of neural networks $\cT$ for which \eqref{eq:err-in-a-est} holds could be chosen to be finite. This naturally leads to another practically-important 
Remark \ref{remark-T}:
\vspace{1mm}
\begin{remark}\label{remark-T}
    In conditions of Theorem \ref{thm:approx_inner}, the class of maps $\cT$ could be chosen to be a ReLU Neural Network (MLP) with fixed width, height; with weights of bounded norms. Explicit capacities (e.g., Rademacher complexities, covering numbers, VC dimensions) of neural nets' classes are well-studied in the literature \citep{golowich2018size,ou2024covering,bartlett2019nearly,bach2017breaking}.
\end{remark}

Now we turn to the analysis of the outer approximation error \eqref{def:e_a_out}. Similarly to the inner case, under the assumption that the optimal potential $\phi^*$ is $\beta$-strongly convex for some $\beta \!>\! 0$, we show that there exists Neural Network architecture for the dual potential $\phi$, which makes the error arbitrarily small.

\begin{theorem}[\normalfont{Outer Approximation Error}]\label{thm:approx_outer}

{\color{black}Let the optimal potential $\phi^*$ be $\beta$ - strongly convex. For arbitrary $\varepsilon>0$ there exists totally bounded w.r.t. Lipschitz norm class $\calF = \cF(\beta, \varepsilon)$ (does not depend on $\phi^*$) such that for some $\beta$-strongly convex and Lipschitz function $\phi_L^{\beta}$ it holds that $\phi_L^{\beta} \in \cF$ and $\cL(\phi_L^{\beta}) - \cL(\phi^*) < \varepsilon$.} 
\end{theorem}

\begin{corollary}\label{corollary_f}
    There exists a class of Lipschitz $\beta$-strongly convex functions, {\color{black}totally bounded} with respect to the Lipschitz norm, such that
    \begin{equation}
        \mathcal{E}_{Out}^{A}(\mathcal F) \leq \varepsilon.
    \end{equation}
\end{corollary}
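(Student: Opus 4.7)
The plan is to read off the corollary as a straightforward packaging of Theorem \ref{thm:approx_outer}: Theorem \ref{thm:approx_outer} already delivers a single function $\phi_L^{\beta} = \phi_L + \beta\|\cdot\|^2/2$ (with $\phi_L$ an ICNN) whose loss is within $\varepsilon$ of $\min_\phi \L(\phi)$. All that remains is to exhibit an ambient class $\calF$ which (i) contains this particular $\phi_L^{\beta}$, (ii) consists entirely of Lipschitz $\beta$-strongly convex functions, and (iii) is compact in the Lipschitz norm. Once such a class is fixed, we immediately get
\[
\min_{\phi\in\calF}\L(\phi)\ \le\ \L(\phi_L^{\beta})\ <\ \min_\phi \L(\phi)+\varepsilon,
\]
and since $\min_\phi\L(\phi)\le\min_{\phi\in\calF}\L(\phi)$ trivially, the absolute value in \eqref{def:e_a_out} can be dropped and we conclude $\mathcal{E}_{Out}^{A}(\calF)\le\varepsilon$.

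For the construction of $\calF$, I would follow Remark \ref{remark} verbatim: take $\calF_{icnn}$ to be ReLU-ICNNs with a fixed depth $D$, a fixed width bound, and spectral bounds $\|W_d\|_2\le K$, then set $\calF=\{\phi+\beta\|\cdot\|^2/2 : \phi\in\calF_{icnn}\}$. The parameters $D,K$ and the width bound are chosen large enough so that the specific $\phi_L$ produced by Theorem \ref{thm:approx_outer} lies in $\calF_{icnn}$; this is legal because Theorem \ref{thm:approx_outer} is proved in \citep{icnn_approx} by giving an explicit ICNN architecture for each target accuracy $\varepsilon$. Strong convexity of each element of $\calF$ is immediate (convex ICNN plus $\beta\|\cdot\|^2/2$), and Lipschitzness on the compact target domain $\calY$ follows from the spectral bounds on the weights together with the $1$-Lipschitzness of ReLU and the boundedness of $\calY$.

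The one step that needs actual justification is compactness of $\calF$ in the Lipschitz norm. The idea is that $\calF$ is the image, under the parametrization map, of a compact parameter set: weight matrices with $\|W_d\|_2\le K$ together with bounded bias vectors (bounded because otherwise the functions explode on $\calY$ and cannot be near-optimal) form a compact subset of Euclidean space. I would then verify that the map $\text{parameters}\mapsto\phi$ is continuous with respect to the Lipschitz norm on $\calC(\calY)$: uniform convergence is immediate from continuity of ReLU and matrix multiplication on compacts, and convergence of the weak derivatives follows since ReLU composed with an affine map has gradients that depend continuously on the parameters away from a measure-zero kink set, and these kink sets contribute nothing to the $\mathrm{ess\,sup}$ of the gradient difference. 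Compactness of $\calF$ then follows as the continuous image of a compact set. This compactness-of-the-parametrized-image argument is the main technical content; everything else is bookkeeping.

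Finally, I would put the three ingredients together in one short paragraph: pick $\varepsilon>0$, invoke Theorem \ref{thm:approx_outer} to obtain $\phi_L^{\beta}$, choose $D,K$ and the width bound so that $\calF_{icnn}$ contains $\phi_L$, and form $\calF$ as above. The displayed inequality at the start of this proposal yields $\mathcal{E}_{Out}^{A}(\calF)\le\varepsilon$, completing the proof of Corollary \ref{corollary_f}.
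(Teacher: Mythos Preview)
Your overall strategy coincides with the paper's: invoke Theorem \ref{thm:approx_outer} to obtain a single $\phi_L^{\beta}=\phi_L+\beta\|\cdot\|^2/2$ with $\phi_L$ a ReLU ICNN, then take $\calF=\{\phi_\theta+\beta\|\cdot\|^2/2:\phi_\theta\in\calF_{icnn}\}$ for a sufficiently expressive ICNN class with spectrally bounded weights, so that $\phi_L^\beta\in\calF$ and $\min_{\phi\in\calF}\L(\phi)\le\L(\phi_L^\beta)<\min_\phi\L(\phi)+\varepsilon$. The paper's proof is essentially a one-sentence sketch of exactly this, so on the level of the main idea you are aligned.

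Where your proposal goes beyond the paper is in attempting to justify Lipschitz-norm compactness of $\calF$ via ``continuous image of a compact parameter set'', and this particular argument does not go through for ReLU networks. The map $\theta\mapsto\phi_\theta$ is \emph{not} continuous from parameter space into $(\calC(\calY),\|\cdot\|_{Lip})$: already for the one-neuron family $\phi_\theta(y)=\mathrm{ReLU}(y-\theta)$ on a compact interval, one has $\|\phi_{\theta_1}-\phi_{\theta_2}\|_{Lip}=1$ for every $\theta_1\neq\theta_2$, so this family is not even totally bounded in the Lipschitz seminorm, let alone the continuous image of a compact set. Your ``kink sets contribute nothing to the ess\,sup'' heuristic is precisely what fails here: the kink moves with $\theta$, and the slope of the difference is $1$ on an interval of positive length, not on a null set. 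In fairness, the paper does not prove this point either (its proof just says ``bound all weight matrices by second norm'', and Remark \ref{remark} explicitly hedges that the class ``is expected to be compact''), so you have correctly identified the one nontrivial step and merely supplied an incorrect justification for a claim the paper leaves open.
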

In our Theorem \ref{thm:approx_outer} (and Corollary \ref{corollary_f}) we assume that the optimal potential $\phi^*$ is $\beta$ - strongly convex. Our Remark \ref{remark-bstrongcvx} clarifies the conditions on the source and target distributions $p$ and $q$ which ensure the strong convexity.
\vspace{1mm}
\begin{remark}\label{remark-bstrongcvx}
Recall that the support subsets $\cX$, $\cY$ be convex; let probability densities $p$ and $q$ be strictly positive and bounded on $\cX$ and $\cY$, respectively; additionally, let probability densities $p, q$ be H\"older continuous functions, i.e., $p \in \cC^{\alpha}(\cX); q \in \cC^{\alpha}(\cY)$ for some $\alpha > 0$. Then, from \citep{caffarelli1996boundary}, Theorem ([C3]), it follows that optimal potential $\phi^*$ is $\beta$-strongly convex for some $\beta > 0$.
\end{remark}

It is noteworthy that the conditions on $p$ and $q$ are rather flexible; they are usually met for distributions that are commonly used in practice (with bounded densities) and for mixtures of distributions, for example mixtures of Gaussians (with the support artificially limited to a large enough ball $B_{R}(0), R \gg 1$, to ensure the compactness of $\cX, \cY$).

\subsection{{\color{black}Bound on the Generalization Error}}\label{subsec:main-res}

The main goal of our paper is the establishment of bounds on \textit{generalization error}, i.e., the difference in true OT map and its empirical approximation defined in \eqref{eq-error}. In the theorem below, we use the previously obtained bounds on the estimation (Theorem \ref{thm:estim_error}) and approximation errors (Theorems \ref{thm:approx_inner}, \ref{thm:approx_outer}) to derive a bound on the generalization error.

\begin{theorem}[Bound on the Generalization Error]\label{thm:main}
    Let the optimal dual potential $\phi^*$ be $\beta$-strongly convex. Then for any $\varepsilon > 0$ there exist such classes $\mathcal F = \mathcal F(\varepsilon, \beta), \mathcal T = \mathcal T(\varepsilon, \cF)$ that
    \vspace{-3mm}
    \begin{align}
    \begin{aligned}
        \EXY \big\Vert T^*-\hat T^{R}\big\Vert^2_{L_2(p)} \leq \varepsilon + \frac{32}{\beta}\left(\mathcal R_{p,N}(\mathcal H)+\mathcal R_{q,M}(\mathcal F)\right).
        \label{main-thm-ineq}
\end{aligned}
    \end{align}
{\color{black}Moreover, $\cF$ and $\cT$ could be chosen as neural networks following Proposition \ref{prop:celu_icnns_compact} and Remark \ref{remark-T}.}
\end{theorem}\vspace*{-2mm}
Theorem \ref{thm:main} shows that a practitioner can make the generalization error \eqref{eq-error} arbitrarily small by choosing appropriate classes of functions {\color{black}$\calF$ and $\calT$. However, the numerical convergence rates remain unclear.}
The next corollary shows for the particular choice of functional classes (neural network), the obtained bound can be specified by replacing the Rademacher complexities with their upper bounds depending only on the number of empirical samples ($N,M$). 

\begin{corollary}[\normalfont{Generalization Error for the Specific Classes of Neural Networks}]\label{corollary-rademacher}
    {\color{black}For any $\varepsilon>0$, let $\calT$, $\calF$ be classes of neural nets specified in Theorem \ref{thm:main}. Then inequality \eqref{main-thm-ineq} turns to}
    \vspace{-2mm}
    \begin{align}
    \begin{aligned}
        \EXY \norm{T^*-\hat T^{R}}^2_{L_2(p)} \leq \varepsilon + O(\frac{1}{\sqrt{N}}) + O(\frac{1}{\sqrt{M}}).
    \end{aligned}
    \end{align}
\end{corollary}
\vspace*{-3mm}
The corollary allows us to conclude that the generalization error can be made {\color{black}arbitrarily} small if one selects the appropriate classes of {\color{black}neural nets and sufficient number of empirical training samples.}

\vspace{-2mm}
\section{Experimental Illustrations}
\vspace{-2mm}
\label{sec-experiments}
    In this section, we experimentally illustrate how the generalization error behaves in practice. Specifically, we show that the empirically observed generalization errors, i.e., the convergence rates of OT solver, are close to the bounds which we theoretically derive in our Corollary \ref{corollary-rademacher}. For clarity of illustration, we study these rates for the examples where the generalization error is in the proximity of either the estimation error (\S\ref{sec-estimation-error}) or the approximation error (\S\ref{sec-approximation-error}).
    For both experiments, we utilize the benchmark from \citep{korotin2021neural} which provides ground-truth OT maps $T^*$ and potentials $\varphi^*$ for the given source and target measures. In fact, the benchmark data \textit{is constructed} based on the potential $\varphi^*$ parametrized by the specific NN architecture (ICNN). We consider their Gaussian mixture experiment, where the first distribution corresponds to a mixture of three Gaussians and the second to an \textit{approximate} mixture of ten Gaussians. We refer to \citep[Appendix A.1]{korotin2021neural} for additional details on the experimental setup. Technical \textit{\underline{training details}} are given in Appendix \ref{app-details}. The source code is written in PyTorch and is available online \url{https://github.com/milenagazdieva/StatOT}.

\vspace{-2mm}\subsection{Experiment on the Estimation Error}\vspace{-2mm}
\label{sec-estimation-error}
\begin{wrapfigure}{r}{0.5\textwidth}
\vspace{-5mm}
\centering
\includegraphics[width=\linewidth]{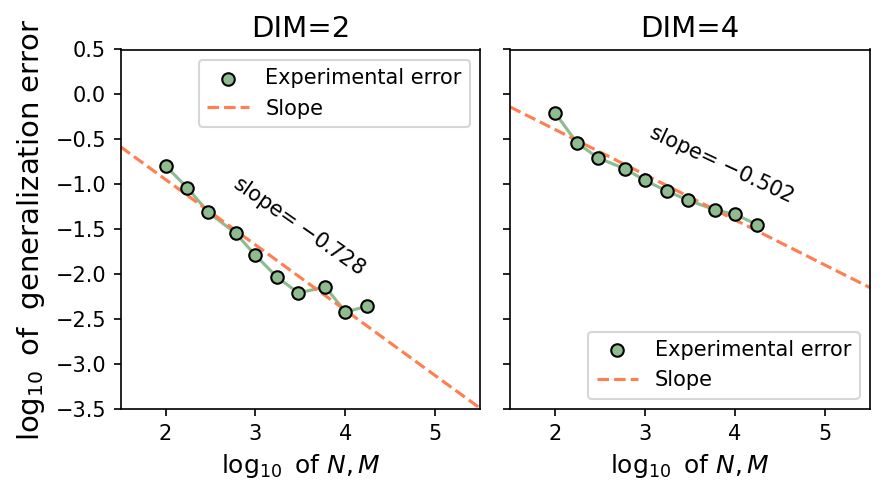}
\vspace{-9mm}
\caption{\centering Convergence rates of the OT solver learned with the quadratic transport cost and a limited number of empirical training samples. %
}
\label{fig-stat}
\vspace{-2mm}
\end{wrapfigure}
    {\color{black}Below we study the effects of the estimation error coming from the limited number of data samples available for training. 
    For this purpose, we use the same neural network architecture for potential $\varphi$ which was used for the ground-truth potential $\varphi^*$ in the benchmark. It allows us to assume that the outer approximation error \eqref{def:e_a_out} is arbitrarily small. Then, for fixed parametrization of OT maps $T_{\omega}$, the generalization error \eqref{main-thm-ineq} reduces to estimation error (up to additive constant).
\\
\textbf{Details.} We consider distributions of dimension $D\!\!=\!\!2,4$ and an OT solver with the quadratic cost \( c(x, y) \!\!=\!\! \|x \!- \!y\|_2^2 \). To show the dependence of the convergence rates on the number of available empirical samples $N,M$, we randomly select different numbers of samples (from $10^2$ to $2 \cdot 10^4$)
from the source and target datasets, and train the OT solver using these samples. We set ICNN as the potential $\phi_{\theta}$ and an MLP with ReLU activations as the transport map $T_{\omega}$. During validation, we calculate the differences between the learned $\hat{T}$ and true $T^*$ OT map $ \|\hat{T}-T^*\|^2_{L^2(p)}$ for all choices of $N,\;M$ and report them in Fig. \ref{fig-stat}. We plot the error using the logarithmic scale. Additionally, we run a linear regression on the logarithms of the error, as well as the logarithms of $N$, $M$. Then we plot the resulting line, which describes the relationship between these variables in $\log_{10}$-$\log_{10}$ space.

\textbf{Results} show that the logarithm of the error exhibits approximately linear convergence w.r.t. the logarithms of \( N, M \) (with slope $\lesssim -0.5$). It supports our theoretically derived bounds on the error, see Corollary \ref{corollary-rademacher}. Still, we note that the theoretical bounds may not hold in more complex practical setups due to the optimization errors coming from particular optimization procedures which are typically used to train the networks. In Appendix \ref{app-baselines}, we report the estimation errors of several trivial {\underline{\textit{baselines.}}} Estimation errors of the OT solver for {\underline{\textit{higher dimensions}} are analyzed in Appendix \ref{app-high-dim}.}

\vspace{-3mm}{\color{black}\subsection{Experiment on the Approximation Error}\vspace{-2mm}
\label{sec-approximation-error}

In this section, we experimentally illustrate the effects of the approximation error coming from the limited expressivity of the architectures of neural nets $T_{\omega}$ and $\phi_{\theta}$. For this purpose, we follow the experimental setup described in \S\ref{sec-estimation-error} with two important differences. First, we take $\approx \!10M$ data samples from the benchmark, which allows us to assume that the estimation error is rather small and, thus, the generalization error \eqref{main-thm-ineq} reduces to the approximation one. Second, we consider more shallow architectures for potential than the one proposed in the benchmark{\color{blue},} which ensures that the approximation error does not vanish.}
\begin{wrapfigure}{r}{0.5\textwidth}
\vspace{-1mm}
\centering
\includegraphics[width=\linewidth]{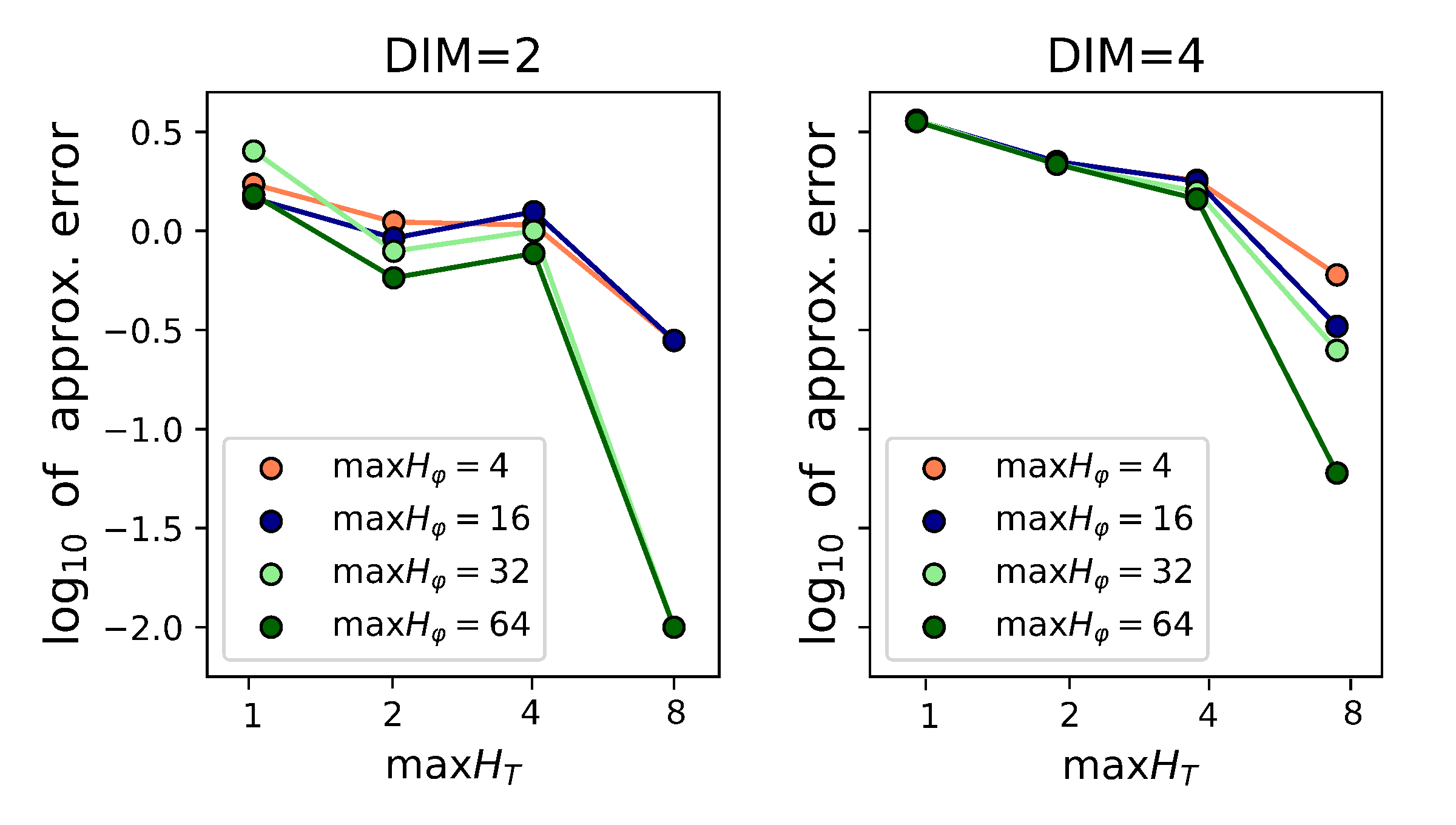}
\vspace{-9mm}
\caption{\centering Empirical approximation error of the OT solver learned with the quadratic transport cost and using shallow NN architectures. %
}
\label{fig-approx}
\vspace{-5mm}
\end{wrapfigure}
\\
\textbf{Details.} We consider neural nets $\phi_{\theta}$ of the same depth as in the architecture for the ground-truth potential $\phi^*$, but with a limited size of hidden layers $H_{\varphi}$ ($\max H_{\varphi}$ from $4$ to $64$). Note that for $\phi^*$, this value is equal to $64$. Besides, we vary the width of the NN architecture for the transport map $T$. We consider the architecture of the same depth as in our previous experiments, but with a varying size of hidden layers $\max H_{T}$ (from $1$ to $8$). In our previous experiments, we set $\max {H_{T}}\!=\!512$. For each experiment, we calculate the difference between the learned and true OT maps: $\|\hat{T}\!-T^*\|_{L^2(\mathbb{P})}^2$, as explained above. We plot the results using the logarithmic scale in Fig. \ref{fig-approx}.

\textbf{Results} show the expected behavior \!$-$\! with the increase of the nets' width, the approximation error becomes smaller; when the potential architecture coincides with the one which was used during the benchmark construction ($\max H_{\varphi}\!\!=\!64$), the approximation error is close to zero. 
Still, the trend might be corrupted by the optimization error, leading to outliers in the particular error values. Besides, the reader might notice that for nets' architectures with extremely small width, the approximation error might collapse to particular large values.
Note that this is an expected behavior since the nets with too shallow architectures have very limited expressivity which might lead to improper solutions and large approximation errors. For clarity, we visualize the \underline{\textit{solver's solutions}} in Appendix \ref{app-collapse}.

\vspace{-3mm}\section{Discussion}\vspace{-3mm}
\label{sec-discussion}
Our paper performs theoretical investigation of semi-dual minimax OT solvers, i.e., a popular and fruitful branch of generative models based on Optimal Transport. While these solvers show impressive results in various machine learning problems, theoretical investigation of their practical convergence guarantees is rather limited. 
We address this problem by presenting the first theoretical analysis of these \textit{minimax} solvers from a \textit{statistical learning} perspective. Strictly speaking, some of the established results are more aligned with the \textit{consistent estimation theory} framework rather than statistical learning one. Most importantly, our paper provides \textit{learnability guarantees} for the solvers which justify their practical usability. Apart from the theoretical results, our paper provides experimental illustrations supporting the theoretical conclusions.
We believe that our research will advance the development of minimax adversarial OT solvers by mitigating potential concerns regarding their theoretical validity. 

{\textit{Limitations.} \color{black}One} limitation of our study corresponds to the focus on a popular case of a quadratic OT cost function. Besides, one can notice that our analysis employs quite strict $\beta$-strong convexity assumptions and does not provide lower bounds on the generalization error. Additionally, we do not provide practical guidance on selecting hyperparameters of neural nets parametrizing the map and potential in OT solvers. We discuss the \textit{\underline{validity of the assumptions}}, \textit{\underline{lower bounds}} and \underline{\textit{practical relevance}} of our analysis in Appendix \ref{app-extended-discussion}.
{\color{black}Generalization of the established bounds for the general OT formulations 
represents a promising avenue for future work.}

\textbf{Reproducibility.} A full list of the experimental details is provided in Appendix \ref{app-details}. Our experiment uses synthetic dataset (benchmark) which is constructed via publicly available code of \citep{korotin2021neural}. We provide the code for the experiments in supplementary material.

\section{Acknowledgments} 
The work was supported by the grant for research centers in the field of AI provided by the Ministry of Economic Development of the Russian Federation in accordance with the agreement 000000C313925P4F0002 and the agreement №139-10-2025-033.

\bibliography{references}
\bibliographystyle{iclr2026_conference}

\newpage
\appendix
\onecolumn
\section{Proofs.}
\label{app-proofs}
\vspace{-1mm}
In this section, we provide the proofs of our theoretical results:
\vspace{-2mm}
\begin{itemize}
    \item[-] Theorem \ref{thm:decomp} (\S \ref{proof:decomp}) which decomposes the recovered map $\hat{T}^R$ error into several approximation and estimation subterms depending on the values of the optimized functional $\L$.
    \item[-] Theorem \ref{thm:estim_error} (\S \ref{proof:estim_error}) which upper-bounds the \textit{estimation} error with Rademacher complexities of function classes $\calF$, $\calH$. 
    \item[-] Theorem \ref{thm:approx_inner} (\S \ref{proof:approx_inner}) and Theorem \ref{thm:approx_outer} (\S \ref{proof:approx_outer}) which demonstrate that the approximation error could be done arbitrary small under properly chosen classes of Neural Networks.  
    \item[-] Theorem \ref{thm:main} (\S \ref{proof:corollary_rademacher}) and Corollary \ref{corollary-rademacher} (\S \ref{proof:corollary_rademacher}) which establish upper bounds for the generalization error \eqref{eq-error} and show that it can be made arbitrarily small by choosing the appropriate classes of functions and sufficient number of samples.
\end{itemize}

Before starting the main proofs, we state the following auxiliary lemma

\begin{lemma}\label{lem-aux}
Let $A, B : \cG \rightarrow \bbR$ be some functionals on a functional space $\cG$. Then
\begin{align}
\big| \inf_{g\in\cG} A(g)-\inf_{g\in\cG} B(g) \big|  \leq \sup_{g\in\cG}|A(g)-B(g)|\nonumber; \\
\big| \sup_{g\in\cG} A(g)-\sup_{g\in\cG} B(g) \big|  \leq \sup_{g\in\cG}|A(g)-B(g)|\nonumber.
\end{align}
\end{lemma}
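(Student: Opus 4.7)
The plan is to prove both inequalities by the standard one-sided-bound-plus-symmetry argument. I would handle the infimum statement first and then observe that the supremum statement follows by an essentially identical calculation (or, alternatively, by applying the infimum statement to $-A$ and $-B$).

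For the infimum case, I would start from the trivial pointwise bound $A(g) \le B(g) + |A(g) - B(g)|$, valid for every $g \in \cG$. Passing to the right-hand upper bound $|A(g) - B(g)| \le \sup_{g'\in\cG} |A(g') - B(g')|$ gives $A(g) \le B(g) + \sup_{g'\in\cG} |A(g') - B(g')|$ uniformly in $g$. Taking the infimum over $g \in \cG$ on both sides and noting that the additive constant comes out of the infimum yields $\inf_{g\in\cG} A(g) \le \inf_{g\in\cG} B(g) + \sup_{g'\in\cG} |A(g') - B(g')|$. Swapping the roles of $A$ and $B$ (which the symmetric quantity $|A - B|$ allows) gives the reverse inequality, and combining the two produces the desired absolute-value bound.

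The supremum statement is proved by exactly the same template: start from $A(g) \ge B(g) - |A(g) - B(g)| \ge B(g) - \sup_{g'\in\cG} |A(g') - B(g')|$, take $\sup_g$ on both sides, and symmetrize. Alternatively, applying the already-proved infimum bound to the functionals $-A$ and $-B$ and using $\inf(-F) = -\sup F$ gives the result immediately.

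There is no substantive obstacle here. The only minor subtlety worth mentioning is the case in which one of $\inf A$, $\inf B$, $\sup A$, $\sup B$, or $\sup |A - B|$ is infinite: if $\sup_{g\in\cG} |A(g) - B(g)| = +\infty$ the inequality is trivial, and otherwise the pointwise bounds above are finite and the manipulations are valid in $\bbR$. No additional structure on $\cG$ (topological, linear, measurable) is required since the argument is purely order-theoretic.
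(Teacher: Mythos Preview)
Your proposal is correct and takes essentially the same approach as the paper: a one-sided bound followed by a symmetry swap. The only cosmetic difference is that the paper picks an $\epsilon$-approximate minimizer of $B$ and sends $\epsilon\to 0$, whereas you take the infimum of the uniform pointwise inequality directly; both are the standard textbook argument and neither offers any real advantage over the other.
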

\begin{proof}

In what follows, we prove the first inequality in the statement of the lemma. The second inequality could be derived in a similar way.

Without loss of generality, assume that
\begin{align*}
\inf_gA(g) \geqslant \inf_gB(g) \Longrightarrow \big| \inf_{g\in\cG} A(g)-\inf_{g\in\cG} B(g) \big| = \inf_{g\in\cG} A(g)-\inf_{g\in\cG} B(g).
\end{align*}
Let us pick $\epsilon > 0$ and consider $g^*_{\epsilon} \in \cG$ such that $B(g^*_{\epsilon}) \leq \inf_{g} B(g) + \epsilon$. Note that ${\inf_{g} A(g) \leq A(g^*_{\epsilon})}$. Therefore:
\begin{align}
    \inf_{g\in\cG} A(g)-\inf_{g\in\cG} B(g) &\leq A(g^*_{\epsilon}) - B(g^*_{\epsilon}) + \epsilon \nonumber \\
    &\leq \big| A(g^*_{\epsilon}) - B(g^*_{\epsilon}) \big| + \epsilon \nonumber \\
    &\leq \sup_{g\in\cG} \big| A(g) - B(g) \big| + \epsilon. \nonumber
\end{align}
Taking the limit $\epsilon \rightarrow 0$ in the last inequality finishes the proof.
\end{proof}

\subsection{Proof of theorem \ref{thm:decomp}}\label{proof:decomp}

First, we upper-bound the error in estimating transport map via duality gaps analysis. Our theorem below borrows the main ideas from \citep[Theorem 3.6]{makkuva}, but has its own specificity, since it deals with transport maps that are not necessarily given by gradients of convex functions.

\begin{theorem}\label{thm:decomp-aux}
    Under the condition of theorem \ref{thm:decomp} it holds:
    \begin{equation}
        \norm{\hat T^R-T^*}^2_{L^2(p)} \leq \frac{4}{\beta}\left( \calE_{1}(\hat{\phi}^R, \hat T^R) + \calE_{2}(\hat{\phi}^R) \right), \nonumber
    \end{equation}
where $\displaystyle \calE_{1}(\hat{\phi}^R, \hat T^R) \eqdef \max_T \L(\hat{\phi}^R, T) - \L(\hat{\phi}^R, \hat T^R)$ is the inner error, and $\displaystyle {\calE_{2}(\hat{\phi}^R) \eqdef \max_T \L(\hat{\phi}^R, T) - \min_{\phi}\max_T \L(\phi, T)}$ is the outer error.
\end{theorem}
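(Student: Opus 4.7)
The plan is to introduce the auxiliary ``ideal'' map $T_{\hphiR}(x) \eqdef \argmax_{y\in\calY}\{\langle x, y\rangle - \hphiR(y)\}$, which is single-valued and measurable since $\hphiR \in \cF$ is $\beta$-strongly convex, and to split the target error via the elementary inequality $\|a+b\|^2\leq 2\|a\|^2+2\|b\|^2$:
\begin{equation*}
\big\|\hTR - T^*\big\|_{L^2(p)}^2 \leq 2\big\|\hTR - T_{\hphiR}\big\|_{L^2(p)}^2 + 2\big\|T_{\hphiR} - T^*\big\|_{L^2(p)}^2.
\end{equation*}
It then suffices to prove $\|\hTR - T_{\hphiR}\|_{L^2(p)}^2 \leq \tfrac{2}{\beta}\calE_1(\hphiR,\hTR)$ and $\|T_{\hphiR} - T^*\|_{L^2(p)}^2 \leq \tfrac{2}{\beta}\calE_2(\hphiR)$, and the claim with constant $4/\beta$ follows immediately.

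For the first (inner) bound I fix $x$ and apply $\beta$-strong convexity to the function $y \mapsto \hphiR(y) - \langle x,y\rangle$, whose unique minimizer is $T_{\hphiR}(x)$ with minimum value $-\ol{\hphiR}(x)$. Evaluating the strong-convexity inequality at $y = \hTR(x)$ yields
\begin{equation*}
\ol{\hphiR}(x) - \langle x, \hTR(x)\rangle + \hphiR(\hTR(x)) \geq \tfrac{\beta}{2}\big\|\hTR(x) - T_{\hphiR}(x)\big\|^2.
\end{equation*}
Integrating against $p(x)dx$ and adding $\int \hphiR(y) q(y) dy$ to both sides, the left-hand side rewrites as $\max_T \L(\hphiR, T) - \L(\hphiR, \hTR) = \calE_1(\hphiR, \hTR)$, which gives the desired estimate on the inner piece.

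For the second (outer) bound I expand $\calE_2(\hphiR) = \L(\hphiR) - \L(\phi^*)$ and use $T^*_\# p = q$ to rewrite
\begin{equation*}
\calE_2(\hphiR) = \int \big[\ol{\hphiR}(x) - \ol{\phi^*}(x) + \hphiR(T^*(x)) - \phi^*(T^*(x))\big] p(x)\,dx.
\end{equation*}
Brenier's identity $\ol{\phi^*}(x) = \langle x, T^*(x)\rangle - \phi^*(T^*(x))$, valid $p$-a.e.\ since $T^* = \nabla \ol{\phi^*}$, collapses the integrand to $\ol{\hphiR}(x) + \hphiR(T^*(x)) - \langle x, T^*(x)\rangle$. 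Applying $\beta$-strong convexity of $\hphiR$ a second time (now at $y = T^*(x)$), this quantity is bounded below by $\tfrac{\beta}{2}\|T^*(x) - T_{\hphiR}(x)\|^2$, which integrates to the desired bound.

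The main subtlety I expect concerns the Brenier identity step: one must justify that $\ol{\phi^*}$ is differentiable $p$-a.e.\ (via Alexandrov's theorem together with absolute continuity of $p$) and that the conjugate relation $\ol{\phi^*}(x) = \langle x, \nabla\ol{\phi^*}(x)\rangle - \phi^*(\nabla\ol{\phi^*}(x))$ holds pointwise on this full-measure set. Crucially, strong convexity is used only on $\hphiR \in \cF$, never on $\phi^*$, so the argument stays within the hypotheses of Theorem~\ref{thm:decomp} and does not require extra regularity of the true Kantorovich potential.
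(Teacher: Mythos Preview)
Your proof is correct and follows essentially the same approach as the paper: both introduce the auxiliary maximizer $T_{\hphiR}$, use $\beta$-strong concavity of $y\mapsto\langle x,y\rangle-\hphiR(y)$ to obtain $\calE_1\geq\tfrac{\beta}{2}\|\hTR-T_{\hphiR}\|_{L^2(p)}^2$ and $\calE_2\geq\tfrac{\beta}{2}\|T^*-T_{\hphiR}\|_{L^2(p)}^2$, and then combine via the triangle inequality. The only cosmetic differences are that the paper squares the triangle inequality at the end (rather than using $\|a+b\|^2\leq 2\|a\|^2+2\|b\|^2$ up front), and that for $\calE_2$ the paper expands $\L(\phi^*,T^*)$ directly and applies $T^*_\#p=q$ twice, which avoids explicitly naming the Brenier identity but is equivalent to your route.
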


\begin{proof}
    We introduce $T_{\hphiR} \eqdef \argmax_{T} \L(\hphiR, T)$. Then,
    \begin{align*}
    \begin{aligned} 
    \calE_{1}(\hat{\phi}^R, \hat T^R) & = \int_{\cX} \left[\langle x,T_{\hphiR}(x) \rangle - \hphiR(T_{\hphiR}(x)) \right] p(x) dx  -
    \int_{\calX}\left[\langle x,\hTR(x) \rangle - \hphiR(\hTR(x)) \right] p(x)dx .
    \end{aligned}
    \end{align*}

    Consider $\beta$-strongly concave $f_x(y) \eqdef \langle x,y \rangle - \hphiR(y)$. Then, 
    \begin{align}
        \calE_{1}(\hat{\phi}^R, \hat T^R) = \int_{\cX} \big[f_x(T_{\hphiR}(x)) - f_x(\hTR(x))\big] p(x) dx. \label{E1-fx}
    \end{align}
    
    Note that $T_{\hphiR}(x)$ is the maximizer of $f_x$ \textcolor{black}{(cf. \nameref{remark-T-is-argmax})}. Thanks to $\beta$-strong concavity of $f_x$ it holds (for any $T$): 
    \begin{align}\label{eq:16}
    f_x(T_{\hphiR}(x)) - f_x(T(x)) \geq \frac{\beta}{2} \norm{T(x) - T_{\hphiR}(x)}_2^2 .
    \end{align}
    Combining \eqref{E1-fx} and \eqref{eq:16}, we get:
    \begin{align}
    \calE_{1}(\hat{\phi}^R, \hat T^R) \geq \frac{\beta}{2} \norm{\hTR-T_{\hphiR}}_{L^2(p)}^2. \label{E1-est}
    \end{align}
    Now we move to $\calE_{2}(\hat{\phi}^R)$. We have the following:
    \begin{align}
        \cE_2(\hphiR) &= \L(\hphiR, T_{\hphiR}) - \L(\phi^*, T^*)  \nonumber \\ &=\int_{\calX}\langle x,T_{\hphiR}(x) \rangle p(x)dx \!-\!\int_{\cX} \hphiR(T_{\hphiR}(x)) p(x)dx + \int_{\cY} \hphiR(y) q(y)dy   \nonumber \\
        &\hspace*{4mm} - \int_{\calX}\langle x,T^*(x) \rangle p(x)dx \!+\underbrace{\!\int_{\cX} \phi^*(T^*(x)) p(x)dx - \int_{\cY} \phi^*(y) q(y)dy}_{= 0 \text{, since } T^*_{\#} p = q}   \nonumber \\
        &= \int_{\calX}\big[\langle x,T_{\hphiR}(x) \rangle \!-\!\hphiR(T_{\hphiR}(x))\big] p(x)dx + \int_{\cY} \hphiR(y) q(y)dy - \int_{\calX}\langle x,T^*(x) \rangle p(x)dx \label{E2-1} \\
        &= \int_{\calX}\big[\langle x,T_{\hphiR}(x) \rangle \!-\!\hphiR(T_{\hphiR}(x))\big] p(x)dx + \int_{\cX} \hphiR(T^*(x)) p(x)dx - \int_{\calX}\langle x,T^*(x) \rangle p(x)dx \label{E2-2} \\
        &= \int_{\calX}\underbrace{\big[\langle x,T_{\hphiR}(x) \rangle \!-\!\hphiR(T_{\hphiR}(x))\big]}_{=f_x(T_{\hphiR}(x))} p(x)dx - \int_{\cX} \underbrace{\big[\langle x,T^*(x) \rangle - \hphiR(T^*(x))\big]}_{=f_x(T^*(x))} p(x)dx, \nonumber \\
        &= \int_{\cX} [f_x(T_{\hphiR}(x)) - f_x(T^*(x))] p(x) dx \label{E2-3} \\
        &\geq  \frac{\beta}{2} \int_{\calX} \norm{T^*(x) - T_{\hphiR}(x)}_2^2p(x)dx. \label{E2-est}
    \end{align}
where in transition from \eqref{E2-1} to \eqref{E2-2} we use  $T^*_{\#} p = q$ which yields $\int_{\cY}\hphiR(y) q(y) dy = \int_{\cX} \hphiR(T^*(x)) p(x) d x$; in transition from \eqref{E2-3} to \eqref{E2-est} we apply property \eqref{eq:16} for $T$ given by true OT map $T^*$.
 
    Application of the triangle inequality for \eqref{E1-est} and \eqref{E2-est} finishes the proof:
    \begin{eqnarray}
        \Vert T^* - \hTR\Vert_{L_2(p)}^2 &\overset{\triangle \text{ ineq.}}{\leq}& \Big(\Vert T^* - T_{\hphiR} \Vert_{L_2(p)} + \Vert T_{\hphiR} - \hTR \Vert_{L_2(p)}\Big)^2 \nonumber \\ 
        &\overset{\text{eqs. \eqref{E1-est}, \eqref{E2-est}}}{\leq}& \left( \sqrt{\frac{2}{\beta}\calE_{1}(\hphiR, \hTR)} + \sqrt{\frac{2}{\beta}\calE_{2}(\hphiR)} \right)^2 \nonumber \\ 
        &\leq& \frac{4}{\beta}\left( {\calE_{1}(\hphiR, \hTR)} + {\calE_{2}(\hphiR)} \right). \nonumber
    \end{eqnarray}
\end{proof}
Now we proceed to the proof of the main theorem. To get the final result, it remains to decompose both $\cE_1$ and $\cE_2$ errors from the Theorem \ref{thm:decomp-aux} into estimation and approximation parts. 

\begin{proof}[Proof of Theorem \ref{thm:decomp}]

From Theorem \ref{thm:decomp-aux} we have:
\begin{align}
    \EXY \lpnorm{\hTR - T^*}^2 \leq \frac{4}{\beta}\Big(\EXY \cE_1(\hphiR, \hTR) + \EXY \cE_2(\hphiR)\Big). \label{err-bound-E1-E2}
\end{align}
For the inner error $\cE_1$ we have:
\begin{align*}
    \calE_{1}(\hat{\phi}^R, \hat T^R) &=\max_T \L(\hat{\phi}^R, T) - \L(\hat{\phi}^R, \hat T^R) \\ 
    &\leq \left|\max_{T} \L(\hat{\phi}^R, T) - \max_{T\in \calT}\L(\hat{\phi}^R, T)\right| + \left|\max_{T\in \calT} \L(\hat{\phi}^R, T) - \L(\hat{\phi}^R, \hat T^R)\right| \\
    &\leq \max_{\phi\in\calF}\left|\max_{T} \L(\phi, T) - \max_{T\in \calT}\L(\phi, T)\right| + \left|\max_{T\in \calT} \L(\hat{\phi}^R, T) - \L(\hat{\phi}^R, \hat T^R)\right|. %
\end{align*}
Taking the expectation of both sides in the inequality above yields:
\begin{align}
    \EXY \cE_1(\hphiR, \hTR) \leq \cE_{In}^{A}(\cF, \cT) + \cE_{In}^{E}(\cF, \cT, N, M). \label{E1-decomp}
\end{align}
\vspace{-2mm}
For the outer error $\cE_2$ we have:
\begin{align}
    \calE_{2}(\hat{\phi}^R) &=\max_T \L(\hat{\phi}^R, T) - \min_{\phi}\max_T \L(\phi, T)  \nonumber \\
    &=\dotuline{\max_T \L(\hat{\phi}^R, T) - \max_{T\in\calT} \L(\hat{\phi}^R, T)} + \uwave{\max_{T\in\calT} \L(\hat{\phi}^R, T) - \min_{\phi\in\calF} \max_{T\in\calT} \L({\phi}, T)} \nonumber \\
    &\hspace*{4mm}+\dashuline{\min_{\phi\in\calF} \max_{T\in\calT} \L({\phi}, T) - \min_{\phi\in\calF} \max_{T} \L({\phi}, T)}  + \uline{\min_{\phi\in\calF} \max_{T} \L({\phi}, T) - \min_{\phi} \max_{T} \L({\phi}, T)} \label{E2-decomp-1} \\
    &\leq \dotuline{\max_{{\phi}\in\mathcal F}\left|\max_{T\in\mathcal T} \L({\phi}, T)-\max_{T} \L({\phi}, T)\right|} + \uwave{\max_{T\in\calT} \L(\hat{\phi}^R, T) - \min_{\phi\in\calF} \max_{T\in\calT} \L({\phi}, T)} \nonumber \\
    &\hspace*{4mm}+\dashuline{\max_{{\phi}\in\mathcal F}\left|\max_{T\in\mathcal T} \L({\phi}, T)-\max_{T} \L({\phi}, T)\right|} + \uline{\min_{\phi\in\cF} \L(\phi) - \L(\phi^*)} \label{E2-decomp-2} \\
    &\leq \dotuline{\cE_{In}^{A}(\cF, \cT)} + \uwave{\left|\max_{T\in\calT} \L(\hat{\phi}^R, T) - \min_{\phi\in\calF} \max_{T\in\calT} \L({\phi}, T)\right|} + \dashuline{\cE_{In}^{A}(\cF, \cT)} + \uline{\cE_{Out}^{A}(\cF)}, \label{E2-decomp-no-exp}
\end{align}
where in transition from \eqref{E2-decomp-1} to \eqref{E2-decomp-2} we apply Lemma \ref{lem-aux} for \dashuline{dashed} terms.

Taking the expectation of the inequality \eqref{E2-decomp-no-exp} results in the following:
\begin{align}
    \EXY \cE_2(\hphiR) \leq \cE_{In}^{A}(\cF, \cT) + \cE_{Out}^{E}(\cF, \cT, N, M) + \cE_{In}^{A}(\cF, \cT) + \cE_{Out}^{A}(\cF). \label{E2-decomp}
\end{align}
The combination of \eqref{err-bound-E1-E2}, \eqref{E1-decomp} and \eqref{E2-decomp} finishes the proof.
\end{proof}

\subsection{Proof of theorem \ref{thm:estim_error}}\label{proof:estim_error}
Our theorem \ref{thm:estim_error} uses some standard notions from learning theory, see, e.g.~\citep[\S 26]{shalev2014understanding}. We recall them for the convenience. For a class $\cF$ of functions $\phi: \cY \rightarrow \bbR$ and probability distribution $q$, the \textbf{representativeness} of a sample $Y \!=\! \{y_1, \dots, y_{M}\} \sim q$ of size $M$:
\begin{equation}
    \text{Rep}_{\cF, q}(Y)\eqdef \sup_{\phi\in\cF} \left|\int \phi(y) q(y)dy -\frac{1}{M} \sum_{m=1}^M \phi(y_m)\right|.
        \label{representativeness-def}
\end{equation}
In turn, the \textbf{Rademacher complexity} of the class $\cF$ w.r.t. the probability distribution $q$ and sample size $M$ is given by:
\begin{equation}
    \cR_{q, M}( \cF) \eqdef \frac{1}{M}\E{\sigma, Y} \bigg\{ \sup_{\phi \in \mathcal{F}} 
            \sum_{m = 1}^{M} \phi(y_m) \sigma_m
    \bigg\},
\label{rademcaher-def}
\end{equation}
where $Y = \{y_m\}_{m = 1}^{M} \sim q$ are mutually independent, $\sigma = \{\sigma_m\}_{m = 1}^M$ are mutually independent Rademacher random variables, i.e., $\text{Prob}\big(\sigma_m = 1\big) = \text{Prob}\big(\sigma_m = -1\big) = 0.5$, and the expectation is taken with respect to both $Y$, $\sigma$. The quantities \eqref{rademcaher-def} and $\eqref{representativeness-def}$ have a well-known relation \citep[Lemma 26.2]{shalev2014understanding}:
\begin{equation}
    \E{Y}\text{Rep}_{\cF, q}(Y)\leq 2 \cR_{q, M}( \cF),
    \label{rademacher-bound}
\end{equation}
where the expectation is taken w.r.t. random i.i.d. sample $Y\sim q$ of size $M$.

Now we proceed to an auxiliary statement needed for the proof of our Theorem \ref{thm:estim_error}. In the next lemma, we upper-bound the difference between the original and sample-based  functionals $\L$ and $\hL$ with representativeness numbers w.r.t. classes $\cF$, $\cH$.

\begin{lemma}\label{lemma:lhatl}
Let $X$ and $Y$ be training samples from $p$ and $q$, respectively. Under the conditions of Theorem \ref{thm:estim_error} it holds:
\begin{align}
\sup _{\phi \in \calF} \sup_{T \in \calT}\big|\L(\phi,T)-\hat \L(\phi,T)\big| \leq \operatorname{Rep}_{\calH, p}( X)+\operatorname{Rep}_{ \calF, q}(Y), \label{ldiff-with-reps}
\end{align}
where $\operatorname{Rep}_{\calH, p}(X)$ and $\operatorname{Rep}_{\cF, q}(Y)$ are representativeness \eqref{representativeness-def} of the corresponding samples w.r.t. corresponding classes.
\end{lemma}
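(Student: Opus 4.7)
The plan is to directly decompose the difference $\L(\phi,T) - \hat\L(\phi,T)$ into two pieces that match the two representativeness terms on the right-hand side of the claim, and then apply the triangle inequality together with suprema. Explicitly, for any $\phi\in\calF$ and $T\in\calT$, define $h(x) \eqdef \langle x, T(x)\rangle - \phi(T(x))$, so that $h\in\calH$ by construction. Then the definitions of $\L$ and $\hat\L$ give
\begin{align*}
\L(\phi,T)-\hat\L(\phi,T) &= \Big(\int_{\calX} h(x)\, p(x)\,dx - \tfrac{1}{N}\sum_{n=1}^N h(x_n)\Big) \\
&\quad + \Big(\int_{\calY} \phi(y)\, q(y)\,dy - \tfrac{1}{M}\sum_{m=1}^M \phi(y_m)\Big).
\end{align*}

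Next I would apply the triangle inequality to pass to absolute values, obtaining a bound of the form $|\L(\phi,T)-\hat\L(\phi,T)| \leq |A(h)| + |B(\phi)|$, where $A(h)$ measures the discrepancy of $h$ on $X$ against $p$, and $B(\phi)$ measures the discrepancy of $\phi$ on $Y$ against $q$. Taking $\sup_{\phi\in\calF}\sup_{T\in\calT}$ on both sides, the right-hand side splits because the two terms depend on disjoint ``variables'': the first term after taking the sup becomes $\sup_{h\in\calH}|A(h)| = \operatorname{Rep}_{\calH,p}(X)$ (using that as $(\phi,T)$ ranges over $\calF\times\calT$, the function $h$ ranges over all of $\calH$ by its very definition), while the second term becomes $\sup_{\phi\in\calF}|B(\phi)| = \operatorname{Rep}_{\calF,q}(Y)$, since $B(\phi)$ does not depend on $T$.

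This yields exactly the desired inequality \eqref{ldiff-with-reps}. The argument is essentially a bookkeeping step: no regularity, measurability, or boundedness assumption is invoked beyond what is already needed to define the integrals in $\L$ and the class $\calH$. The only \emph{subtle point}, and the one I would make sure to state carefully, is the surjectivity observation that $\{(\phi,T)\mapsto h\} : \calF\times\calT \to \calH$ is onto by the very construction of $\calH$; this is what legitimizes replacing $\sup_{\phi\in\calF,T\in\calT}$ by $\sup_{h\in\calH}$ on the first term without loss. There is no hard obstacle here; the lemma is purely a ``separation'' step that prepares the way for invoking the Rademacher bound \eqref{rademacher-bound} on each representativeness term in the subsequent proof of Theorem \ref{thm:estim_error}.
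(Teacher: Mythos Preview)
Your proposal is correct and follows essentially the same approach as the paper: decompose $\L(\phi,T)-\hat\L(\phi,T)$ into the $h$-part and the $\phi$-part, apply the triangle inequality, and bound each by the corresponding representativeness via a supremum. Your explicit remark on the surjectivity of $(\phi,T)\mapsto h$ onto $\calH$ is a nice touch that the paper leaves implicit.
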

\begin{proof}
Let us write out in detail the right-hand-side of \eqref{ldiff-with-reps} and regroup the terms:
\begin{align*}
\sup _{\phi \in \calF} \sup_{T \in \calT} \big|&\L(\phi,T)-\hat \L(\phi,T)\big| = \\
&= \sup_{\phi\in \calF\ T\in \calT} \bigg| \int \left[ \langle x, T(x))-\phi(T(x)\rangle \right] p(x)dx+\int \phi(y) q(y)dy  \\
&\hspace*{4mm}-\frac{1}{N} \sum_{n=1}^N\left[ \langle
x_n, T(x_n)\rangle- \phi(T(x_n))\right]-\frac{1}{M} \sum_{m=1}^M \phi\left(y_m\right) \bigg|
\\ 
&\leq  \sup_{\phi\in \calF\ T\in \calT}\Bigg\{\bigg|\int \underbrace{\left[ \langle x, T(x)\rangle-\phi(T(x)) \right]}_{=h(x), h \in \cH} p(x)dx -\frac{1}{N} \sum_{n=1}^N\underbrace{\left[\langle x_n, T(x_n)\rangle-\phi(T(x_n))\right]}_{=h(x_n)}\bigg| \\
&\hspace*{4mm}+\bigg| \int \phi(y) q(y)dy-\frac{1}{M} \sum_{m=1}^M \phi(y_m) \bigg|\Bigg\} 
\\ &\leq
 \sup_{h\in\calH} \bigg|\int h(x) p(x)dx -\frac{1}{N} \sum_{n=1}^N h(x_n)\bigg| + \sup_{\phi\in\calF} \bigg| \int \phi(y) q(y)dy-\frac{1}{M} \sum_{m=1}^M \phi(y_m) \bigg| \\ &=
\operatorname{Rep}_{\calH, p}( X)+\operatorname{Rep}_{\calF, q}(Y).
\end{align*}
\end{proof}
Now we proceed to the proof of Theorem \ref{thm:estim_error}
\begin{proof}[Proof of theorem \ref{thm:estim_error}]
We start by decomposing $\cE_{In}^{E}$ and $\cE_{Out}^{E}$ into subterms:
\begin{align}
    \mathcal{E}_{In}^{E}(\calF, \calT, N, M) & = 
    \EXY
     \left| \L(\hat{\phi}^R, \hat T^R) - \max_{T\in\calT}\L(\hat {\phi}^R, T)\right| \nonumber\\
    &\leq \EXY
    \bigg[ \Big| \L(\hat{\phi}^R, \hat T^R) -  \hat \L(\hat {\phi}^R, \hat T^R)\Big| \label{eq:26} \\
    &\hspace*{4mm} + \Big| \hat \L(\hat {\phi}^R, \hat T^R)- \max_{T\in\calT}\L(\hat {\phi}^R, T)\Big| \bigg]; \label{eq:27} \\
    \mathcal{E}_{Out}^{E}(\calF, \calT, N, M) & = \EXY
    \left[ \max_{T\in\calT} \L(\hat{\phi}^R, T) - \min_{\phi\in\calF} \max_{T\in\calT} \L({\phi}, T) \right] \nonumber \\
    & \leq \EXY
    \bigg[ \Big| \max_{T\in\calT} \L(\hat{\phi}^R, T) - \hat \L(\hat{\phi}^R, \hat T^R) \Big| \label{eq:28}\\
    &\hspace*{4mm} + \Big| \hat \L(\hat{\phi}^R, \hat T^R) - \min_{\phi\in\calF} \max_{T\in\calT} \L({\phi}, T) \Big| \bigg]. \label{eq:29}
\end{align}
Now we analyze the subterms from the equations above separately. For simplicity, we omit the expectation ``$\EXY$'' sign.

\uline{Term \eqref{eq:29}}
\begin{equation}
    \left| \hat \L(\hat {\phi}^R, \hat T^R)- \min_{\phi\in\calF}\max_{T\in\calT}\L(\phi, T)\right| = \left|\min_{\phi\in\calF}\max_{T\in\calT} \hat \L( {\phi}, T)- \min_{\phi\in\calF}\max_{T\in\calT}\L(\phi, T)\right|.
\end{equation}
Notice that $\max_{T\in\calT}\L(\phi, T)$ and $\max_{T\in\calT}\hat\L(\phi, T)$ are some functionals, depending on $\phi$ (denote them as $l(\phi)$ and $\hat l(\phi)$). Then we can apply Lemma \ref{lem-aux}:
\begin{equation}
    \left| \min_{\phi\in\calF}l(\phi)-\min_{\phi\in\calF}\hat l(\phi) \right| \leq \max_{\phi\in\calF}\left| l(\phi)-\hat l(\phi) \right| = \max_{\phi\in\calF} \left| \max_{T\in\calT} \hat \L( {\phi}, T)- \max_{T\in\calT}\L(\phi, T) \right|.
\end{equation}
Applying Lemma \ref{lem-aux} again to the internal expression, we get: 

\begin{equation}
    \max_{\phi\in\calF} \left| \max_{T\in\calT} \hat \L( {\phi}, T)- \max_{T\in\calT}\L(\phi, T) \right| \leq \max_{\phi\in\calF} \max_{T\in\calT} \left|  \hat \L( {\phi}, T)- \L(\phi, T) \right|.
\end{equation}

\uline{Terms \eqref{eq:27}, \eqref{eq:28}.}

Similarly, we apply Lemma \ref{lem-aux}:
\begin{align}
    \left| \hat \L(\hat {\phi}^R, \hat T^R)- \max_{T\in\calT}\L(\hat {\phi}^R, T)\right| &= \left|\max_{T\in\calT} \hat \L(\hat {\phi}^R, T)- \max_{T\in\calT}\L(\hat {\phi}^R, T)\right| \nonumber \\ &\leq
    \max_{T\in\calT} \left| \hat \L(\hat {\phi}^R, T)- \L(\hat {\phi}^R, T) \right|.
\end{align}
\uline{Finally}, we get that all four terms \eqref{eq:26}, \eqref{eq:27}, \eqref{eq:28}, \eqref{eq:29} are bounded by $\max_{\phi\in\calF} \max_{T\in\calT} \left|  \hat \L( {\phi}, T)- \L(\phi, T) \right|$. Therefore, by applying Lemma \ref{lemma:lhatl} we get:
\begin{align*}
    \mathcal{E}_{In}^{E}(\calF, \calT, N, M) + \mathcal{E}_{Out}^{E}(\calF, \calT, N, M) &\leq \EXY\left[ 4\max_{\phi\in\calF} \max_{T\in\calT} \left|  \hat \L( {\phi}, T)- \L(\phi, T) \right| \right] \\ &\leq 4 \E{X}\operatorname{Rep}_{\calH, p}( X)+4 \E{Y}\operatorname{Rep}_{\calF, q}(Y) \\
    &\overset{\text{\eqref{rademacher-bound}}}{\leq} 8 \mathcal R_{p, N}(\mathcal H) + 8 \mathcal R_{q, M}(\mathcal F),
\end{align*}
which finishes the proof.

\end{proof}

\subsection{Proof of theorem \ref{thm:approx_inner}}\label{proof:approx_inner}
To begin with, we recall some standard definitions and notions used in this subsection. Given function $\phi \in \cC(\cY)$, its \textbf{Lipschitz (semi)norm} is defined as:
\begin{align*}
    \Vert \phi \Vert_{Lip} \eqdef \sup_{\scriptscriptstyle \substack{y \neq y'\\ y,y'\in\cY}} \frac{\vert \phi(y) - \phi(y')\vert}{\Vert y - y'\Vert_2}.
\end{align*}
(Continuous) functions with bounded Lipschitz norm are called \textbf{Lipschitz continuous} and denoted as $Lip(\cY) \subset \cC(\cY)$.

Now we proceed to the main content of the subsection. The next auxiliary lemma states that the differences in the functional $\L$ values could be controlled by the properties of its arguments. 

\begin{lemma}\label{LleqT}
    Let $\phi\in Lip(\mathcal{Y})$ and $T_1,T_2 : \cX \rightarrow \cY$ be measurable mappings. Then,
\[
|\L(\phi,T_1) - \L(\phi,T_2)| \leq (\sup_{x\in \mathcal{X}} \|x\|_2 + \|\phi\|_{Lip})  \norm{T_1-T_2}_{L_1(p)}.
\]
\end{lemma}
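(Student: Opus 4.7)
The plan is a direct computation: expand the difference $\L(\phi,T_1) - \L(\phi,T_2)$, observe that the $\int \phi(y)q(y)dy$ term is independent of $T$ and cancels, and bound the remaining integrand pointwise using Cauchy--Schwarz and the Lipschitz property of $\phi$.

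More concretely, first I would write
\begin{align*}
\L(\phi,T_1) - \L(\phi,T_2) = \int_{\calX}\bigl[\langle x, T_1(x)-T_2(x)\rangle - (\phi(T_1(x)) - \phi(T_2(x)))\bigr]\, p(x)\, dx,
\end{align*}
after which the target term disappears. Then the triangle inequality together with (i) the Cauchy--Schwarz bound $|\langle x,T_1(x)-T_2(x)\rangle| \leq \|x\|_2\,\|T_1(x)-T_2(x)\|_2 \leq (\sup_{x\in\calX}\|x\|_2)\cdot\|T_1(x)-T_2(x)\|_2$, and (ii) the Lipschitz estimate $|\phi(T_1(x)) - \phi(T_2(x))| \leq \|\phi\|_{Lip}\,\|T_1(x)-T_2(x)\|_2$, yields the pointwise bound
\begin{align*}
\bigl|\langle x, T_1(x)-T_2(x)\rangle - (\phi(T_1(x))-\phi(T_2(x)))\bigr| \leq \bigl(\sup_{x\in\calX}\|x\|_2 + \|\phi\|_{Lip}\bigr)\|T_1(x)-T_2(x)\|_2.
\end{align*}

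Finally, I would integrate against $p$ and apply the Cauchy--Schwarz inequality in $L^2(p)$ (using that $p$ is a probability density, so $\|1\|_{L^2(p)}=1$) to convert the resulting $L^1(p)$ norm of $\|T_1 - T_2\|_2$ into the $L^2(p)$ norm:
\begin{align*}
\int_{\calX} \|T_1(x)-T_2(x)\|_2\, p(x)\,dx \leq \lpnorm{T_1 - T_2}.
\end{align*}
Combining these gives the claimed inequality. There is no real obstacle here; the only subtlety is remembering that compactness of $\calX$ guarantees $\sup_{x\in\calX}\|x\|_2 < \infty$, so the bound is genuinely finite, and that passing from $L^1(p)$ to $L^2(p)$ requires $p$ being a probability measure (true by assumption $p\in\Pac(\calX)$).
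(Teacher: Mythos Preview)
Your proof is correct and follows essentially the same approach as the paper's: expand the difference, bound the integrand pointwise via Cauchy--Schwarz and the Lipschitz property of $\phi$, then integrate. If anything, you are slightly more explicit than the paper in spelling out the final $L^1(p)\to L^2(p)$ step via Cauchy--Schwarz, which the paper leaves implicit in the phrase ``substitution \ldots finishes the proof.''
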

\begin{proof} 

Consider the left-hand-side of the desired inequality:
\begin{align}
    \big\vert \L(\phi, T_1) - \L(\phi, T_2) \big\vert &= \bigg\vert \int_{\cX} \Big\{\big(\langle x, T_1(x)\rangle - \phi(T_1(x))\big) - \big( \langle x, T_2(x)\rangle - \phi(T_2(x))\big)\Big\}p(x)dx \bigg\vert \nonumber \\
    &\hspace*{-5mm}\overset{\text{Jensen ineq.}}{\leq} \int_{\cX} \Big\vert\big(\langle x, T_1(x)\rangle - \phi(T_1(x))\big) - \big(\langle x, T_2(x)\rangle - \phi(T_2(x))\big)\Big\vert p(x)dx. \label{ldiff-1}
\end{align}
Consider the expression under integral sign:
\begin{align}
    \Big\vert\big(\langle x,T_1(x) \rangle - \phi(T_1(x))\big) - \big(\langle (x,&T_2(x)\rangle  - \phi(T_2(x))\big)\Big\vert \nonumber \\
    &\leq \dotuline{\big\vert \langle x,T_1(x)\rangle -\langle x,T_2(x)\rangle \big|} + \dashuline{\big|\phi(T_1(x))-\phi(T_2(x))\big|} \nonumber \\
    &\leq \dotuline{\|x\|_2  \norm{T_1(x)-T_2(x)}_2} + \dashuline{\|\phi\|_{Lip} \norm{T_1(x)-T_2(x)}_2}. \label{ldiff-2} 
\end{align}
The substitution of \eqref{ldiff-2} into \eqref{ldiff-1} finishes the proof.
\end{proof}

\begin{corollary}\label{cor1}
Let $ T_{\phi}^{R} = \argmax_{T\in\mathcal{T}}\L(\phi,T),\ T_{\phi} = \argmax_{T}\L(\phi,T)$. Then,
\[
 \max_{T} \L(\phi,T) - \max_{T\in\mathcal{T}} \L(\phi,T)\leq \big(\sup_{x\in \mathcal{X}} \|x\|_2 + \|\phi\|_{Lip}\big)  \norm{T_{\phi}^{R}-T_{\phi}}_{L_1(p)}.
\]
\end{corollary}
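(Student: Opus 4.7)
The plan is to obtain this as a direct consequence of Lemma \ref{LleqT}, which already gives the key Lipschitz-type bound for $\L(\phi, \cdot)$ with respect to the $L^2(p)$ norm on maps. The corollary is essentially the specialization of that lemma to the two specific maps $T_\phi$ and $T_\phi^R$, combined with an observation that removes the absolute value on the left-hand side.

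First I would rewrite both sides of the difference using the definitions of the argmaxes, namely $\max_T \L(\phi, T) = \L(\phi, T_\phi)$ and $\max_{T \in \calT} \L(\phi, T) = \L(\phi, T_\phi^R)$, so the quantity of interest becomes $\L(\phi, T_\phi) - \L(\phi, T_\phi^R)$. Next I would note that this difference is nonnegative: since $\calT$ is a subset of the space of measurable maps $\cX \to \cY$, maximizing $\L(\phi, \cdot)$ over $\calT$ cannot exceed maximizing over all measurable maps, so the difference equals its own absolute value. Then I would invoke Lemma \ref{LleqT} with $T_1 = T_\phi$ and $T_2 = T_\phi^R$ to conclude
\[
\L(\phi, T_\phi) - \L(\phi, T_\phi^R) = \big|\L(\phi, T_\phi) - \L(\phi, T_\phi^R)\big| \leq \big(\sup_{x \in \cX} \|x\|_2 + \|\phi\|_{Lip}\big) \lpnorm{T_\phi^R - T_\phi},
\]
which is exactly the claim.

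The only thing to double-check is that Lemma \ref{LleqT} is genuinely applicable here, i.e.\ that both $T_\phi$ and $T_\phi^R$ are measurable maps $\cX \to \cY$ (so that the lemma's hypotheses on $T_1, T_2$ are met) and that $\phi \in Lip(\cY)$. The measurability of $T_\phi$ follows from the measurable selection argument already cited in the paper (after equation \eqref{eq:rock-interchange}), and $T_\phi^R \in \calT$ is a neural network map, hence measurable. The Lipschitz assumption on $\phi$ is the standing hypothesis on $\cF$ in Theorem \ref{thm:approx_inner} to which this corollary will be applied, so no additional work is needed. There is no real obstacle here; the statement is a one-line consequence of the lemma once the absolute value is removed via the nestedness $\calT \subseteq \{\text{measurable maps}\}$.
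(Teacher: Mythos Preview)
Your proposal is correct and matches the paper's intended argument: the corollary is stated immediately after Lemma \ref{LleqT} with no separate proof, precisely because it is the direct specialization you describe (apply the lemma to $T_1=T_\phi$, $T_2=T_\phi^R$ and drop the absolute value via $\calT\subseteq\{\text{measurable maps}\}$). Nothing further is needed.
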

The next auxiliary lemma upper-bounds the $L_1(p)$ norm between maps which are optimal for given dual potentials.

\begin{lemma}\label{lemma 2.3}
    Let $\phi_1,\phi_2 \in Lip(\cY)$ be $\beta$-strongly convex. Then:
    \begin{align*}
        \norm{T_{\phi_1}-T_{\phi_2}}_{L_1(p)}\leq \frac{1}{\beta} \|\phi_1-\phi_2\|_{Lip},
    \end{align*}
    where $T_{\phi_1} = \argmax_{T}\L(\phi_1,T)$; $T_{\phi_2} = \argmax_{T}\L(\phi_2,T)$.
\end{lemma}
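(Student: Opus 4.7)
The plan is to reduce the $L^2(p)$ statement to a \emph{pointwise} inequality
\begin{equation*}
    \|T_{\phi_1}(x) - T_{\phi_2}(x)\|_2 \leq \tfrac{1}{\beta}\|\phi_1 - \phi_2\|_{Lip} \qquad \text{for every } x\in\cX,
\end{equation*}
which, since the right-hand side is independent of $x$, integrates trivially to the claimed bound after squaring and taking square roots with respect to $p$. The pointwise bound is the heart of the lemma and will rely on the standard two-sided ``strong-convexity sandwich'' argument that is routinely used to prove Lipschitz dependence of argmaxes on the objective.

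For fixed $x\in\cX$, set $y_i \eqdef T_{\phi_i}(x) = \argmax_{y\in\cY}\{\langle x,y\rangle - \phi_i(y)\}$ for $i=1,2$. Since $\phi_i$ is $\beta$-strongly convex, the map $y\mapsto \langle x,y\rangle - \phi_i(y)$ is $\beta$-strongly concave, and its maximizer $y_i$ satisfies the standard sharp inequality
\begin{equation*}
    \bigl[\langle x,y_i\rangle - \phi_i(y_i)\bigr] - \bigl[\langle x,y\rangle - \phi_i(y)\bigr] \geq \tfrac{\beta}{2}\|y - y_i\|_2^2, \qquad \forall\, y\in\cY.
\end{equation*}
I would apply this with $(i,y) = (1,y_2)$ and $(i,y) = (2,y_1)$ and add the two resulting inequalities. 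The inner products $\langle x,\cdot\rangle$ cancel completely, and what remains can be rearranged as
\begin{equation*}
    (\phi_1 - \phi_2)(y_2) - (\phi_1 - \phi_2)(y_1) \geq \beta\,\|y_1 - y_2\|_2^2.
\end{equation*}

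The Lipschitz seminorm of a difference being the appropriate object here, the left side is bounded above by $\|\phi_1-\phi_2\|_{Lip}\cdot\|y_1-y_2\|_2$. If $y_1 = y_2$ there is nothing to prove; otherwise I divide by $\|y_1-y_2\|_2$ to obtain the desired pointwise bound $\|y_1 - y_2\|_2 \leq \tfrac{1}{\beta}\|\phi_1-\phi_2\|_{Lip}$. Integrating the square of this inequality against $p$,
\begin{equation*}
    \lpnorm{T_{\phi_1}-T_{\phi_2}}^{2} = \int_{\cX}\|T_{\phi_1}(x)-T_{\phi_2}(x)\|_2^{2}\,p(x)\,dx \leq \tfrac{1}{\beta^{2}}\|\phi_1-\phi_2\|_{Lip}^{2},
\end{equation*}
and taking square roots finishes the proof. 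The only subtlety I anticipate is ensuring that $T_{\phi_i}$ is well-defined as a measurable map (uniqueness of the maximizer follows from strong concavity, and measurability is guaranteed by the measurable selection argument already cited after \eqref{eq:rock-interchange}); once that is in place, the argument is essentially a one-line consequence of strong convexity plus Lipschitzness, with no genuine obstacle.
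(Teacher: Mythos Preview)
Your proof is correct and follows essentially the same approach as the paper's: fix $x$, apply the $\beta$-strong concavity inequality for each $\phi_i$ at the other's maximizer, add to cancel the inner products, and bound the resulting $(\phi_1-\phi_2)(y_2)-(\phi_1-\phi_2)(y_1)$ via the Lipschitz seminorm before dividing and integrating. The only cosmetic difference is that you state the strong-concavity inequality in its ``sharp'' form and mention the measurability caveat explicitly; the argument is otherwise identical.
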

\begin{proof}

We pick $x \in \cX$ and consider function $y \mapsto \langle x, y\rangle-\phi_1\left(y\right)$. Then:
\vspace{-2mm}
\begin{itemize}
    \item $T_{\phi_1}(x)$ is the maximizer of $y \mapsto \langle x, y\rangle-\phi_1\left(y\right)$ (see \nameref{remark-T-is-argmax});
    \vspace{-2mm}
    \item  $y \mapsto \langle x, y\rangle-\phi_1\left(y\right)$ is a $\beta$-strongly concave function.
\end{itemize}
\vspace{-2mm}
By the property of strongly-concave function, we have:
\begin{align}
\langle x, T_{\phi_1}(x)\rangle-\phi_1\left(T_{\phi_1}(x)\right) - \frac{\beta}{2}&\left\|T_{\phi_1}(x)-T_{\phi_2}(x)\right\|^2_2 \geq \langle x, T_{\phi_2}(x)\rangle-\phi_1\left(T_{\phi_2}(x)\right) \Longrightarrow \nonumber \\
\frac{\beta}{2}\left\|T_{\phi_1}(x)-T_{\phi_2}(x)\right\|^2_2 &\leq\phi_1\left(T_{\phi_2}(x)\right)-\phi_1\left(T_{\phi_1}(x)\right) + \langle x, T_{\phi_1}(x)\rangle - \langle x, T_{\phi_2}(x)\rangle. \label{T-through-phi-1}
\end{align}
Similar inequality holds for $\phi_2$:
\begin{align}
 \langle x, T_{\phi_2}(x)\rangle -\phi_2\left(T_{\phi_2}(x)\right) -\frac{\beta}{2}&\left\|T_{\phi_1}(x)-T_{\phi_2}(x)\right\|^2_2 \geq \langle x, T_{\phi_1}(x)\rangle - \phi_2\left(T_{\phi_1}(x)\right) \Longrightarrow \nonumber \\
 \frac{\beta}{2}\left\|T_{\phi_1}(x)-T_{\phi_2}(x)\right\|^2_2 &\leq \phi_2\left(T_{\phi_1}(x)\right) - \phi_2\left(T_{\phi_2}(x)\right) + \langle x, T_{\phi_2}(x)\rangle -\langle x, T_{\phi_1}(x)\rangle. \label{T-through-phi-2}
\end{align}
Combining inequalities \eqref{T-through-phi-1} and \eqref{T-through-phi-2} we get:
\begin{align*}
    \beta\left\|T_{\phi_1}(x)-T_{\phi_2}(x)\right\|^2_2 &\leq\left[\phi_1-\phi_2\right]\left(T_{\phi_2}(x)\right)-\left[\phi_1-\phi_2\right] \left(T_{\phi_1}(x)\right) \\ &\leq \|\phi_1-\phi_2\|_{Lip} \left\|T_{\phi_1}(x)-T_{\phi_2}(x)\right\|_2,
\end{align*}
i.e., $\left\|T_{\phi_1}(x)-T_{\phi_2}(x)\right\|_2 \leq \frac{1}{\beta} \|\phi_1-\phi_2\|_{Lip}$. Taking the expectation w.r.t. probability distribution $p(x)$ completes the proof.
\end{proof}
The next lemma is a technical result, which allows to restrict the output of mappings $T^{NN} : \cX \rightarrow \bbR^D$ parameterized with neural networks to compact support space $\cY$ with help of a \textit{projection} function.

\begin{lemma}\label{lemma_compact_proj}
    Consider the \textit{projection} operator $\normalfont{\text{proj}}_{\cY} : \bbR^D \rightarrow \cY$, defined as :
    \begin{align*}
        \normalfont{\text{proj}}_{\cY} (r) = \argmin\limits_{y \in \cY} \Vert y - r \Vert_2.
    \end{align*}
    Then, $\forall r_1, r_2 \in \bbR^D$ it holds:
    \begin{align}
        \Vert \projY(r_1) - \projY(r_2) \Vert_2 \leq \Vert r_1 - r_2 \Vert_2. \label{eq:projY_is_lipschitz}
    \end{align}
    In particular, $\normalfont{\text{proj}}_{\cY}$ is a $1$-Lipschitz function.
\end{lemma}
\begin{proof}

To begin with, we prove the following technical fact.

\textbf{Part I. Technical fact}. Let $y \in \cY$, $r \in \bbR^D \backslash \cY$. Let $y_\star = \text{proj}_{\cY}(r)$. Then:
\begin{align}
    \langle y - y_\star, y_\star - r\rangle \geq 0. \label{eq:main_proj_inequality}
\end{align}

Consider variable $\beta \in [0, 1]$, and $y_\beta = (1-\beta) y_\star + \beta y$. Note that $y_\beta \in \cY$, since $\cY$ is convex. We have the following:
\begin{align*}
    \Vert y_\beta - r \Vert_2^2 &= \Vert y_\beta - y_\star + y_\star - r \Vert_2^2  \\
    &= \beta^2 \Vert y - y_\star \Vert_2^2 + 2 \beta \langle y - y_\star, y_\star - r \rangle + \Vert y_\star - r \Vert_2^2.
\end{align*}
By definition of $\projY$ it holds: $\Vert y_\beta - r \Vert_2 \geq \Vert y_\star - r \Vert_2$. Then, 
\begin{align*}
    \beta^2 \Vert y - y_\star \Vert_2^2 + 2 \beta \langle y - y_\star, y_\star - r \rangle &\geq 0;  \\
    2 \beta \langle y - y_\star, y_\star - r \rangle &\geq - \beta^2 \Vert y - y_\star \Vert_2^2 \quad \text{(divide by } 2 \beta > 0 \text{)};\\
    \langle y - y_\star, y_\star - r \rangle &\geq - \frac{\beta}{2} \Vert y - y_\star \Vert_2^2.
\end{align*}
The last inequality holds for every $\beta > 0$. Therefore, \eqref{eq:main_proj_inequality} holds true.

Now, we move on to proving the statement of the Lemma. We consider different options:
 \begin{enumerate}[label = (\roman*)]
    \item \label{it:projy_lip.in_in} $r_1, r_2 \in \cY$;
    \item \label{it:projy_lip.in_out} $r_1 \in \cY$ and $r_2 \in \bbR^D \backslash \cY$ (equivalent to $r_2 \in \cY$ and $r_1 \in \bbR^D \backslash \cY$);
    \item \label{it:projy_lip.out_out} $r_1, r_2 \in \bbR^D \backslash \cY$.
\end{enumerate}

Note that case \ref{it:projy_lip.in_in} is trivial. We proceed to cases \ref{it:projy_lip.in_out} and \ref{it:projy_lip.out_out}. 

\textbf{Part II. Proof of the Lemma; case \ref{it:projy_lip.in_out}.}

For ease of perception, introduce $y_1 := \projY(r_1) = r_1 \in \cY$. Let $y_{\star, 2} = \projY(r_2)$. Then:
\begin{align*}
    \enormsq{r_1 - r_2} &= \enormsq{y_1 - r_2} = \enormsq{y_1 - y_{\star, 2} + y_{\star, 2} - r_2} \\
    &= \underbrace{\enormsq{y_{\star, 2} - r_2}}_{\geq 0} \,\,\,\, +  \!\!\!\!\underbrace{\enormsq{y_1 - y_{\star, 2}}}_{= \enormsq{\projY(r_1) - \projY(r_2)}} \!\!\!\! +\,\,\, 2 \underbrace{\langle y_1 - y_{\star, 2}, y_{\star, 2} - r_2 \rangle}_{\geq 0, \text{ see \eqref{eq:main_proj_inequality}}} \\
    &\geq \enormsq{\projY(r_1) - \projY(r_2)}
\end{align*}

\textbf{Part III. Proof of the Lemma; case \ref{it:projy_lip.out_out}.}

Let $y_{\star, 1} = \projY(r_1)$; $y_{\star, 2} = \projY(r_2)$. We have the following:
\begin{align*}
    \enormsq{r_1 - r_2} &= \enormsq{\dashuline{r_1 - y_{\star, 1}} + y_{\star, 1} - y_{\star, 2} + \dotuline{y_{\star, 2} - r_2}} \\
    &= \underbrace{\enormsq{\dashuline{r_1 - y_{\star, 1}} + \dotuline{y_{\star, 2} - r_2}}}_{\geq 0} \,\,\, + \!\!\! \underbrace{\enormsq{y_{\star, 1} - y_{\star, 2}}}_{= \enormsq{\projY(r_1) - \projY(r_2)}} \\
    &\quad\quad+ 2 \underbrace{\langle \dashuline{r_1 - y_{\star, 1}}, y_{\star, 1} - y_{\star, 2} \rangle}_{\geq 0, \text{ see \eqref{eq:main_proj_inequality}}} + 2 \underbrace{\langle \dotuline{y_{\star, 2} - r_2}, y_{\star, 1} - y_{\star, 2} \rangle}_{\geq 0, \text{ see \eqref{eq:main_proj_inequality}}} \\
    &\geq \enormsq{\projY(r_1) - \projY(r_2)}
\end{align*}

\end{proof}

Now we proceed to the proof of the main theorem statement.

\begin{proof}[Proof of theorem \ref{thm:approx_inner}]

First notice, that from compactness of $\mathcal{F}$ it follows that $\calF$ is uniformly bounded, i.e, $\exists L \in \mathbb{R}:\ \forall \phi\in\mathcal{F}\ \|\phi\|_{Lip} \leq L$.

For a fixed arbitrary small $\varepsilon$ pick $\delta_1 = \frac{\beta \varepsilon}{2\left(\sup_{x\in \mathcal{X}} \|x\| + L\right)}$. From the definition of compactness, it follows that there exists a finite \( \delta_1 \)-covering of \( \mathcal{F} \): 
\begin{equation}
\forall \delta_1 > 0\ \exists I=I(\delta_1),\ \exists \phi_1, \dots, \phi_n\in \mathcal{F} :\ \mathcal{F} \subset \bigcup_{i=1}^I \mathcal{F}_i,
\end{equation}
where $\mathcal{F}_i = \{\phi:\ \phi\in \mathcal{F}, \|\phi-\phi_i\|_{Lip}\leq \delta_1\}$

Pick $\delta_2 = \frac{\varepsilon}{2\left(\sup_{x\in \mathcal{X}} \|x\| + L\right)}$. According to \citep[Theorem 4.16]{kidger2020universal}  for  each $T_{\phi_k}$ there exists neural network $T_{\phi_k}^{NN(\delta_2)} $, such that $\norm{T_{\phi_k}^{NN(\delta_2)}-T_{\phi_k}}_{L_1(p)} \leq \delta_2$ for arbitrary small $\delta_2$. Thanks to Lemma \ref{lemma_compact_proj}, $\norm{\text{proj}_{\cY} \circ T_{\phi_k}^{NN(\delta_2)} - T_{\phi_k}}_{L_1(p)} \leq \norm{T_{\phi_k}^{NN(\delta_2)}-T_{\phi_k}}_{L_1(p)} \leq \delta_2$. Let's choose class of neural networks $\mathcal{T}_{ub}(\delta_2)$ such that it includes all the functions $T_{\phi_1}^{NN(\delta_2)}, \dots, T_{\phi_n}^{NN(\delta_2)}$. Then for such class:

\begin{equation}
\max_{T} \L(\phi_k,T) - \max_{T\in\mathcal{T}_{ub}(\delta_2)} \L(\phi_k,\projY\!\circ\!T) \leq \max_{T} \L(\phi_k,T) -  \L(\phi_k,\projY\!\circ\!T_{\phi_k}^{NN(\delta_2)}) .
\end{equation}

Now we are ready to bound inner approximation error.

Consider $\phi\in\mathcal{F}$. Then $\phi$ belongs to one of the sets of a finite coverage $\mathcal{F}_k$. According to lemma \ref{lemma 2.3} we have $\norm{T_{\phi_k}-T_{\phi}}_{L_1(p)} \leq \frac{1}{\beta}\|\phi_k-\phi\|_{Lip} \leq \frac{\delta_1}{\beta}$

We also can approximate $T_{\phi_k}$ by the neural network, so we can do it with $T_{\phi}$:

\begin{equation}
 \norm{\projY\!\circ\!T_{\phi_k}^{NN(\delta_2)}-T_{\phi}}_{L_1(p)} \!\!\leq\! 
 \norm{\projY\!\circ\!T_{\phi_k}^{NN(\delta_2)}\!-\! T_{\phi_k}}_{L_1(p)} \!\!+  \norm{T_{\phi_k}\!-\!T_{\phi}}_{L_1(p)} \leq  \\
\delta_2 + \frac{\delta_1}{\beta}.
\end{equation}

Then it remains to apply Lemma \ref{LleqT} and use the uniform boundness of $\mathcal{F}$:

\begin{align}
\max_{T} \L(\phi,T) - \L(\phi,\projY\!\circ\!T_{\phi_k}^{NN(\delta_2)}) &\leq \left(\sup_{x\in \mathcal{X}} \|x\| + \|\phi\|_{Lip}\right)  \norm{\projY\!\circ\!T_{\phi}^{{NN}(\delta_2)}-T_{\phi}}_{L_1(p)} \nonumber \\ &\leq 
 \left(\sup_{x\in \mathcal{X}} \|x\| + L\right) \left( \delta_2 + \frac{\delta_1}{\beta} \right) = \varepsilon.
\end{align}

Due to the arbitrariness of the choice of $\phi$, we obtain

\begin{equation}
    \max_{\phi\in\calF}\left[\max_{T} \L(\phi,T) - \max_{T\in\calT_{ub}(\delta_2)}\L(\phi,\projY\!\circ\!T)\right] \leq \varepsilon
\end{equation}

\end{proof}

\subsubsection{Proof of proposition \ref{prop:celu_icnns_compact}}\label{proof:celu_icnns_compact}

Let us denote $\phi_s$ as a $K$-layer ICNN  \citep{amos2017input} with skip connections of the form
\begin{align}\label{icnn_common_form}
z_{1}(y) & = s(a_{1}y+b_{1})  \nonumber\\
& \dots \nonumber\\
z_{i+1}(y) & = s(w_{i+1}z_i(y)+a_{i+1}y+b_{i+1}) \\
& \dots \nonumber\\
z_{K}(y) & = s(w_Kz_{K-1}(y)+a_{K}y+b_{K}),\nonumber
\end{align}
where $w_i\in W_i$, $a_i \in A_i,\ b_i\in B_i$, and for $i$-th layer $W_i$ ($W_1 = 0$), $A_i$ and $B_i$ are compact subsets of $\bbR^{D_{i+1}\times D_i}_{+}$ (nonnegative weights), $\bbR^{D_{i+1}\times D}$ and $\bbR^{D_{i+1}}$ respectively for any $i$-th layer, $s$ is a convex and non-decreasing activation function; $D_{K + 1} = 1$ (ICNNs have scalar output). If $s$ is ReLU, then $\phi_s$ is a neural network from universal approximation theorem for Lipschitz convex functions \citep[Theorem 1]{icnn_approx}. Further for convenience we denote $h_i = w_iz_{i-1}(y)+a_{i}y+b_{i}$; $p(\phi_s) = \{ (w_i, a_i, b_i) | i=1,\dots,K \}$; $\cP = \{(w_i, a_i, b_i) \vert w_i \in W_i, a_i \in A_i, b_i \in B_i\}_{i = 1}^{K}$ is the whole parameters space; $\cP_{n} = \{w_1, a_1, b_1, \dots, w_K, a_K, b_K\}$ are parameters' ``names''.

\begin{lemma}\label{delta-eps}
    Let $s$ be a smooth function. Let $p \in \cP_n$ be a parameter of ICNN $\phi_s$. Then $\forall \delta > 0$ $\exists \varepsilon > 0$ such that $\Vert p - p^1 \Vert_2 \leq \varepsilon \Rightarrow \Vert \phi_s - \phi_s^1\Vert_{Lip} < \delta$.
\end{lemma}

\begin{proof}

Notice that $\phi_s$ is differentiable. Then $\norm{\phi_s}_{Lip}=\max_{y\in\cY}\norm{\nabla \phi_s(y)}_2$. 

Consider $\xi(y, p) \eqdef \nabla \phi_{s | p=p}(y)$, i.e, $\xi$ is the gradient (w.r.t to input $y$) of ICNN $\phi_s$ with specific value of parameter $p$. In particular, $\xi(y, p^1) = \nabla \phi_s^1(y)$. Note that $\phi_s$ is smooth (as the composition of smooth functions), both w.r.t input $y$ and parameters $\cP$. Therefore, $\xi(y, p)$ is continuous. To finish the proof we are left to note that $\xi(y, p)$ is uniformly continuous ($y, p$ are from compact sets).
\end{proof}

\begin{lemma}{\normalfont{(ICNNs \eqref{icnn_common_form} are totally bounded)}.}\label{icnn_compactness}
    Let $\calF$ be a class of $K$-layer ICNNs of the form \ref{icnn_common_form} with smooth activation $s$. Then $\calF$ is totally bounded with respect to the Lipschitz norm.
\end{lemma}

\begin{proof}
    Pick arbitrary $\delta > 0$. From Lemma \ref{delta-eps}, for each parameter $p \in \cP_n$, we can choose $\varepsilon_p > 0$ such that:
    \begin{align*}
        \Vert p - p^1\Vert_2 < \varepsilon_p \Rightarrow \|\phi_s - \phi^1_s\|_{Lip} < \frac{\delta}{|\cP_n|},
    \end{align*}
    where $|\cP_n|$ is the number of parameters in the neural network (it is finite). Define $\varepsilon := \min_{p \in \cP_n} \varepsilon_p$.

    Each parameter $p$ belongs to a compact set $P$; we consider a (finite) $\varepsilon$-net $\cE_p$ that covers $P$. Then, the Cartesian product 
    \begin{align*}
        \cE \eqdef \underset{p \in \cP_n}{\otimes} \cE_p 
    \end{align*}
    defines the (finite) $\varepsilon$-net that covers the whole parameters set $\cP$. Within each ``ball'' $\varsigma \in \cE$, different combinations of parameters $(p_1^\varsigma, p_2^\varsigma, \dots, p_{|\cP|}^{\varsigma}) , (p_1^{\varsigma'}, p_2^{\varsigma'}, \dots, p_{|\cP|}^{\varsigma'}) \in \varsigma$ are $\varepsilon$-close to each other point-wise, i.e., $\Vert p_i^{\varsigma} - p_i^{\varsigma'} \Vert_2 \leq \varepsilon$. By subsequently substituting $p_i^{\varsigma}$ with $p_i^{\varsigma'}$, $i = 1, 2, \dots, |\cP_n|$ we obtain:
    \begin{align*}
        \Vert \phi_s^{\varsigma} - \phi_s^{\varsigma'}\Vert_{Lip} < \delta,
    \end{align*}
    i.e., $\cE$ defines $\delta$-net for $\phi_s$. Thus, $\calF$ is totally bounded with respect to the Lipschitz norm.
\end{proof}

\subsection{Proof of theorem \ref{thm:approx_outer}}\label{proof:approx_outer}

Below, we will use the notion of $\mathcal{B}$-smoothness. Function $\phi \!:\! \cY \!\rightarrow\! \bbR$ is called \textbf{$\mathcal B$-smooth} if it is continuously differentiable on $\calY$, and its gradient
is Lipschitz continuous with Lipschitz constant $\mathcal B$:
\begin{equation}
    \norm{\nabla \phi(y_1)-\nabla \phi(y_2)}_2 \leq \mathcal B\norm{y_1-y_2}_2, \quad \forall y_1,y_2\in\calY. \nonumber
\end{equation}
We proceed with several auxiliary results needed for the main proof.

\begin{lemma}\label{lemma:smooth_lipsch}
    Let $\phi$ is $\mathcal B$-smooth on $\calY$. Then there exists $L>0$ such that $\phi$ is $L$-Lipschitz, i.e., $\Vert \phi \Vert_{Lip} \leq L$.
\end{lemma}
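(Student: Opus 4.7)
The plan is to turn the Lipschitz-gradient hypothesis into a uniform bound on the gradient using compactness of $\calY$, and then integrate to obtain a Lipschitz estimate for $\phi$ itself.

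First, I would establish that $\nabla\phi$ is bounded on $\calY$. Since $\phi$ is $\mathcal{B}$-smooth, $\nabla\phi$ is Lipschitz, hence continuous, on $\calY$. By compactness of $\calY$, the continuous map $y \mapsto \|\nabla\phi(y)\|_2$ attains its maximum; call this value $L$. Equivalently, fixing any $y_0\in\calY$, the $\mathcal{B}$-smoothness estimate directly yields
\[
\|\nabla\phi(y)\|_2 \leq \|\nabla\phi(y_0)\|_2 + \mathcal{B}\cdot\mathrm{diam}(\calY),
\]
which is finite because $\calY$ is bounded.

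Next, I would convert this uniform gradient bound into a Lipschitz bound on $\phi$. For any $y_1, y_2\in\calY$, the fundamental theorem of calculus along the segment from $y_2$ to $y_1$ gives
\[
\phi(y_1)-\phi(y_2) \;=\; \int_0^1 \langle \nabla\phi(y_2 + t(y_1-y_2)),\, y_1-y_2 \rangle\, dt,
\]
and Cauchy--Schwarz combined with the uniform bound produces $|\phi(y_1)-\phi(y_2)|\leq L\,\|y_1-y_2\|_2$, i.e., $\|\phi\|_{Lip}\leq L$, which is the claim.

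The main subtlety is that the integration in the second step requires the segment $[y_1, y_2]$ to lie inside the domain on which $\nabla\phi$ is controlled. If $\calY$ is convex, this is automatic. Otherwise one passes to $\conv{\calY}$ (still compact) and invokes $\mathcal{B}$-smoothness to extend $\phi$ there while preserving a uniform gradient bound up to constants; in the setting of this paper, where the relevant $\phi$ arise as ICNN-type convex functions defined on all of $\bbR^D$ (cf.\ Remark \ref{remark}), this extension is free of charge and the argument goes through verbatim.
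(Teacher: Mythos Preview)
Your proposal is correct and follows essentially the same two-step strategy as the paper: (i) bound $\|\nabla\phi\|_2$ uniformly on $\calY$ using the Lipschitz-gradient hypothesis together with compactness, and (ii) convert this into a Lipschitz bound on $\phi$. The only difference is in the execution of step~(ii): you apply the fundamental theorem of calculus along the segment and Cauchy--Schwarz, obtaining the sharp constant $L=\sup_{\calY}\|\nabla\phi\|_2$, whereas the paper invokes the quadratic upper bound from $\mathcal{B}$-smoothness (the descent inequality) and ends up with an additional additive $\tfrac{\mathcal{B}}{2}$-type term in its Lipschitz constant. Both routes implicitly require the segment $[y_1,y_2]$ to lie in the domain where the gradient is controlled; you flag this explicitly, while the paper does not, but as you note the functions in the downstream application extend naturally to all of $\bbR^D$, so the issue is moot.
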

\begin{proof}
By fundamental theorem of calculus along a segment $y$ to $y'$ we have:
$$\phi(y)-\phi\left(y^{\prime}\right)=\int_0^1\left\langle\nabla \phi\left(y^{\prime}+t\left(y-y^{\prime}\right)\right), y-y^{\prime}\right\rangle d t.$$

Applying Cauchy-Schwarz inequality we get:

\begin{equation}\label{cauchy-schwarz}
    \left|\phi(y)-\phi\left(y^{\prime}\right)\right| \leq \int_0^1\left\|\nabla \phi\left(y^{\prime}+t\left(y-y^{\prime}\right)\right)\right\|_2\left\|y-y^{\prime}\right\|_2 d t.
\end{equation}

Given that $\nabla \phi$
 is continuous (because 
 $\phi$ is 
$\mathcal B$-smooth) and $\mathcal Y$
 is bounded, then there exists $L$
 defined as $L=\sup_{y\in\calY}\norm{\nabla\phi(y)}_2<\infty$.
Replacing in \eqref{cauchy-schwarz} we get
$$
|\phi(y)-\phi(y')|\leq L\norm{y-y'}_2,
$$

which concludes the proof.
\end{proof}

\begin{lemma}[Property of convex conjugate w.r.t. sup norm]\label{lemma-conj-sup-norm}
    Let $\phi_1, \phi_2 \in \cC(\cY)$. Then,
    \begin{align*}
        \Vert \ol{\phi_1} - \ol{\phi_2} \Vert_{\infty} \leq \Vert \phi_1 - \phi_2 \Vert_{\infty}.
    \end{align*}
    If $\phi_1$ and $\phi_2$ are convex, then the inequality becomes an equality, i.e., $\Vert \ol{\phi_1} - \ol{\phi_2} \Vert_{\infty} = \Vert \phi_1 - \phi_2 \Vert_{\infty}$.
\end{lemma}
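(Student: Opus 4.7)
The plan is to prove the two claims of the lemma in sequence: the inequality follows from a direct sup-swap estimate, and the equality under convexity follows from iterating the conjugation via the Fenchel--Moreau biconjugate identity.

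\textbf{Inequality.} For any point $x$ at which both conjugates are evaluated, the compactness of $\cY$ together with continuity of $\phi_1$ guarantees the existence of some $y_1^{\star} \in \argmax_{y \in \cY}\{\langle x, y\rangle - \phi_1(y)\}$. Lower-bounding $\ol{\phi_2}(x)$ by plugging this same $y_1^{\star}$ into the max yields
\[
\ol{\phi_1}(x) - \ol{\phi_2}(x) \leq \big[\langle x, y_1^{\star}\rangle - \phi_1(y_1^{\star})\big] - \big[\langle x, y_1^{\star}\rangle - \phi_2(y_1^{\star})\big] = \phi_2(y_1^{\star}) - \phi_1(y_1^{\star}) \leq \|\phi_1 - \phi_2\|_\infty,
\]
and swapping the roles of $\phi_1$ and $\phi_2$ gives the symmetric bound. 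Taking the sup over $x$ then finishes this direction.

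\textbf{Equality under convexity.} The idea is simply to iterate the conjugate. For convex continuous $\phi_i$ on compact $\cY$, extending by $+\infty$ outside $\cY$ produces a proper, convex, lower semi-continuous function on $\bbR^D$, and the classical Fenchel--Moreau theorem then implies $\ol{\ol{\phi_i}} = \phi_i$ on $\cY$. Since $\ol{\phi_1},\ol{\phi_2}$ are themselves continuous convex functions, applying the already-proved inequality to this pair yields
\[
\|\phi_1 - \phi_2\|_\infty = \|\ol{\ol{\phi_1}} - \ol{\ol{\phi_2}}\|_\infty \leq \|\ol{\phi_1} - \ol{\phi_2}\|_\infty,
\]
which combined with the first part forces equality.

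\textbf{Main obstacle.} The only real subtlety I foresee is bookkeeping around the domains of the conjugates: the paper defines $\ol{\phi}$ with $y$ ranging over $\cY$ and treats $x$ as living in $\cX$, whereas the biconjugate identity is standard for Fenchel conjugates taken over all of $\bbR^D$. I would clarify at the outset that the sup norms in the statement are taken over the natural ambient spaces where each side is defined and extend the conjugates to $\bbR^D$ by $+\infty$ outside the relevant compact sets so that Fenchel--Moreau applies cleanly and no mass is lost when restricting the outer supremum in $\ol{\ol{\phi_i}}$. Once this clarification is in place, the remainder of the argument is entirely routine.
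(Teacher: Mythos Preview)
Your proposal is correct and follows essentially the same route as the paper: the inequality is obtained by the sup-swap/maximizer argument (the paper phrases this via its Lemma~\ref{lem-aux}, you do it by picking an explicit maximizer, which is the same proof), and the equality in the convex case is obtained by applying the inequality to the pair $\ol{\phi_1},\ol{\phi_2}$ together with the biconjugate identity $\ol{\ol{\phi_i}}=\phi_i$. The domain issue you flag is real but the paper's proof glosses over it as well, so your treatment is if anything more careful.
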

\begin{proof}
    We pick $x \in \cX$. Then,
    \begin{align*}
        \vert \ol{\phi_1}(x) - \ol{\phi_2}(x) \vert &\hspace{4mm}=\hspace{4mm} \vert\max_{y \in \cY} [ \langle x, y \rangle - \phi_1(y)] - \max_{y \in \cY} [\langle x, y\rangle - \phi_2(y)]\vert \\
        &\overset{\text{Lemma \ref{lem-aux}}}{\leq} \max_{y \in \cY} \vert \cancel{\langle x, y \rangle} - \phi_1(y) - \cancel{\langle x, y \rangle} + \phi_2(y) \vert \\
        &\hspace{4mm}\leq\hspace{4mm} \Vert \phi_1 - \phi_2 \Vert_{\infty}.
    \end{align*}
    Taking the supremum w.r.t. $x$ in the inequality above yields $\Vert \ol{\phi_1} - \ol{\phi_2} \Vert_{\infty} \leq \Vert \phi_1 - \phi_2 \Vert_{\infty}$.

    If $\phi_1$ and $\phi_2$ are convex, then:
    \begin{gather*}
        \Vert \phi_1 - \phi_2 \Vert_{\infty} = \Vert \ol{\ol{\phi_1}} - \ol{\ol{\phi_2}} \Vert_{\infty} \leq \Vert \ol{\phi_1} - \ol{\phi_2} \Vert_{\infty},
    \end{gather*}
    which proves the remaining statement.
\end{proof}

\begin{lemma}\label{lemma:conv_approx_by_lipsch}

     Let $\Phi$ be a set of convex functions on compact domain $\calY\subset\bbR^D$ and uniformly bounded with respect to the Lipschitz norm. Then for any $\varepsilon>0$ there exists such constant $L=L(\varepsilon)$, that any $\phi\in\Phi$ can be arbitrarily closely approximated by an $L$-Lipschitz function, i.e., for any $\varepsilon > 0$ one can choose a Lipschitz continuous function $\phi_L \in Lip(\cY)$, $\norm{\phi_L}_{Lip} \leq L$ such that
    \[
    \|\phi-\phi_L\|_{\infty} \leq \varepsilon.
    \]
\end{lemma}

\begin{proof}

Consider $\phi\in\Phi$ and $\varepsilon > 0$ and take $\mathcal B = \mathcal B(\varepsilon) = \frac{\sup_{\cY} \Vert y\Vert_2^2}{2 \varepsilon}$. Consider the following functions (recall that $\ol{\phi}$ is convex conjugate of $\phi$):
\begin{gather*}
    \psi(x) \eqdef \overline \phi(x) + \frac{\|x\|^2_2}{2\mathcal B} \quad ; \quad \phi_L \eqdef \overline \psi = \overline {\overline \phi + \frac{\|\cdot\|^2_2}{2\mathcal B}}.
\end{gather*}
Note that $\psi$ is $\frac{1}{\mathcal B}$-strongly convex. Therefore, by \citep[Theorem 6]{kakade} $\phi_L$ is $\mathcal B$-smooth.
Applying Lemma \ref{lemma:smooth_lipsch} we get that $\phi_L$ is also Lipschitz with some constant $L = L(\varepsilon)$ (note that $L$ does not depend on particular $\phi$ thanks to uniform boundedness of $\Phi$ w.r.t. lischitz norm).

Finally,
\begin{gather*}
    \|\phi-\phi_L\|_{\infty} \overset{\text{Lemma \ref{lemma-conj-sup-norm}}}{=} \norm{{\overline{\phi}} -  \ol{\phi_L}}_{\infty} = \norm{{\overline{\phi}} -  {\overline \phi + \frac{\|.\|^2_2}{2L_s}}}_{\infty} = \sup_{\calY} \frac{\|y\|^2_2}{2\mathcal B} = \varepsilon,
\end{gather*}
which completes the proof.
\end{proof}

The next Lemma improves on \citep[Lemma 1]{icnn_approx}.

\begin{lemma}\label{finiteK}

    Let $\Phi$ be a set of convex functions on compact domain $\calY\subset\bbR^D$ and uniformly bounded w.r.t. the Lipschitz and $\sup$ norms. Then for each $\varepsilon>0$ one can choose $K$ such that $\forall\phi\in\Phi$ it can be approximated by maximum of at most $K$ of affine functions with parameters on compact sets. That is $\forall\phi\in\Phi$ there exists $\hat{\phi}(y)=\max_{i=1,\dots,n}\{ \mu_i^{\T}y+b_i \}$, such that $\norm{\phi-\hat{\phi}}_{\infty} < \varepsilon$ and $n\leq K$, $\mu_i\in M, b_i\in B$, where $M \subset\bbR^D$ and $B\subset \bbR$ are compact (and do not depend on particular $\phi$).
    
\end{lemma}

\begin{proof}

Note that $\calY$ is compact. Then there exists finite $\frac{\varepsilon}{4L}$-net $\cE$ of $\cY$, $|\cE| = K$. In what follows we approximate convex function $\phi$ with linear function on each of the $\frac{\varepsilon}{4L}$-balls. 

Note that $\Phi$ is uniformly bounded, and there exists the global Lipschitz constant $L$: $\norm{\phi}_{Lip}\leq L$.

For each $\frac{\varepsilon}{4L}$-ball $\varsigma_i \in \cE$ let us choose an interior point $y_i^1$, where $\phi$ is differentiable and consider tangent plane $\mu_i^{\T}y+b_i$ at this point.  Note that $\phi(y_i^1) = \mu_i^\T y_i^1 + b_i$ and $\forall y \in \cY: \phi(y) \geq \mu_i^\T y + b_i$.

These planes are also $L$-Lipschitz, therefore for any other point $y_i^2\in\varsigma_i$ it holds:
\begin{gather*}
    |\phi(y_i^2)-\phi(y_i^1)| \leq L \norm{y_i^2-y_i^1} \leq \frac{\varepsilon}{2}; \\
    |(\mu_i^{\T}y_i^2+b_i)-(\mu_i^{\T}y_i^1+b_i)| \leq L \norm{y_i^2-y_i^1} \leq \frac{\varepsilon}{2}.
\end{gather*}
Summing up these inequalities, we get:
\begin{gather*}
    \big\vert \phi(y_i^2) - (\mu_i^{\T}y_i^2+b_i) - \underbrace{\big[\phi(y_i^1) - (\mu_i^{\T}y_i^1+b_i)\big]}_{= 0}\big\vert \leq \varepsilon,
\end{gather*}
i.e., on ball $\varsigma_i$ our function $\phi$ could be approximated from below by the corresponding plane $\mu_i^{\T}y+b_i$. 

Now we define $\widehat{\phi}(y) \eqdef \max_{i = 1, \dots, K}\{\mu_i^\T y + b_i\}: \cY \rightarrow \bbR$, where $(\mu_i, b_i)$ correspond to each $\varsigma_i$. Obviously, 
$$| \phi(y) - \widehat{\phi}(y) | = \phi(y) - \widehat{\phi}(y) \leq \varepsilon, \forall y \in \cY.$$

In remains to notice that by property of Lipschitz functions we have $\norm{\mu_i}_2 \leq L$. In turn, the compactness of parameters $b_i$ follows from the uniform boundedness of $\Phi$ w.r.t. $\sup$ norm ($\Vert \, \cdot \, \Vert_{\infty}$).

\end{proof}

Taking the advantage of our Lemma \ref{finiteK}, we can formulate an improvement over \citep[Theorem 1]{icnn_approx}:

\begin{corollary}\label{theorem1_chen}
    For any $\varepsilon>0, L > 0, C > 0$ we can choose $K$ and compact parameters set $\cP$ such that for any convex function $\phi$ with $\Vert \phi \Vert \leq C, \Vert \phi \Vert_{Lip} \leq L$ defined over a compact domain $\cY$ there exists an ICNN \eqref{icnn_common_form} $\phi_{\sigma}$ with depth and width limited by $K$, with weights $p\in\cP$ and ReLU activation functions $\sigma$ that approximates it within $\varepsilon$: $$
\norm{\phi-\phi_{\sigma}}_{\infty} < \varepsilon.
$$
\end{corollary}

\begin{proof}
    Follows the proof of \citep[Thm. 1]{icnn_approx}, but utilizes  Lemma \eqref{finiteK} instead of \citep[Lemma 1]{icnn_approx}.%
\end{proof}

\begin{lemma}\label{smooth_icnn}

Let $s_n \rightrightarrows \sigma$ on any bounded set. Then 
$\forall \varepsilon>0$ and parameters space $\cP$  we can choose $n = n(\varepsilon, \cP)\in\bbN$ such that two ICNNs \eqref{icnn_common_form} $\phi_{s_n}$ and $\phi_{\sigma}$ with coinciding weights $p(\phi_{s_n}) = p(\phi_{\sigma})$ and different activation functions $s_n$ and $\sigma$ respectively are close by $\infty$-norm:
\[
\norm{\phi_{s_n}-\phi_{\sigma}}_{\infty} < \varepsilon.
\]
In particular, $s_n$ could be chosen to be $CELU_n$ activation functions $CELU_n(x)=\max(0,x)+\min\left(0, \frac{1}{n}(e^{nx} - 1)\right)$, $\sigma$ is $ReLU$ activation.
\end{lemma}

\begin{proof}

First, let's notice that for fixed compact $\cP$ from \ref{icnn_common_form}, compact input domain and bounded activation functions, output of each layer in ICNN belongs to a compact set, i.e.
\[
\max_{p(\phi)\in \cP, y\in\calY} \norm{\phi(y)}_{\infty} < \mathrm{Const}
\]
Let's denote layers of two neural networks $\phi_{s_n}$ and $\phi_{\sigma}$ as $z^{s_n}_k$ and $z^{\sigma}_k$. We prove the statement by induction. Consider the difference of the first layer outputs':
\[
\norm{z^{s_n}_1(y)-z^{\sigma}_1(y)}_{\infty} = \norm{s_n(h^{s_n}_1(y))-\sigma(h^{\sigma}_1(y))}_{\infty}
\]

As weights are equal, then $h^{s_n}_1(y)=h^{\sigma}_1(y)$. Recalling that for $p\in\cP$ function (of parameters and inputs) $h^{s_n}_1(y)$ takes values over a compact set, due to uniform convergence of $s_n$, we have
\[
\max_{p(\phi)\in \cP, y\in\calY} \norm{z^{s_n}_1(y)-z^{\sigma}_1(y)}_{\infty} \rightarrow 0
\]
Consider then difference of $k$-th layers
\begin{align}
\norm{z^{s_n}_k(y)-z^{\sigma}_k(y)}_{\infty} &= \norm{s_n(h^{s_n}_k(y))-\sigma(h^{\sigma}_k(y))}_{\infty} \leq \nonumber \\
&\norm{s_n(h^{s_n}_k(y))-s_n(h^{\sigma}_k(y))}_{\infty} + \norm{s_n(h^{\sigma}_k(y))-\sigma(h^{\sigma}_k(y))}_{\infty}. \label{proof-eq-n}
\end{align}
Consider the first term of \eqref{proof-eq-n}. Note that $\norm{h^{s_n}_k(y)-h^{\sigma}_k(y)}_{\infty} = \norm{ w_k \big( z^{s_n}_{k-1}(y) - z^{\sigma}_{k-1}(y) \big)}_{\infty}$. Therefore, due to continuity of $s_n$ and boundedness of $w_k \in W_k$ it holds: $$\max_{p\in \cP, x\in\calX} \norm{s_n(y^{s_n}_k(x))-s_n(y^{\sigma}_k(x))}_{\infty} \rightarrow 0.$$ 

The second term $\norm{s_n(y^{\sigma}_k(x))-\sigma(y^{\sigma}_k(x))}_{\infty}$ uniformly tends to zero because of assumption of our lemma $s_n \rightrightarrows \sigma$.

Finally we get that $\max_{p\in P, x\in\calX}\norm{z^{s_n}_k(x)-z^{\sigma}_k(x)}_{\infty} \rightarrow 0$ for $k$-th layer. And, therefore by induction, we have uniform convergence for the ICNNs itself: $$\max_{p\in \cP, y\in\calY} \norm{\phi_{s_n}(y)-\phi_{\sigma}(y)}_{\infty} \rightarrow 0.$$ \end{proof}

\begin{lemma}\label{lipsch_approx_by_celu}
    Let $\Phi$ be a class of convex functions, uniformly bounded w.r.t. Lipschitz and $\sup$ norms. Then for a fixed $\varepsilon>0$ there exists a class $\calF$ of ICNNs \eqref{icnn_common_form} with $CELU_1$ activation totally bounded with respect to the Lipschitz norm, such that
    \[
    \forall \phi\in\Phi \, \exists \phi_{s}\in\calF:\ \norm{\phi-\phi_s}_{\infty} < \varepsilon.
    \]
\end{lemma}

\begin{proof}
By Corollary \ref{theorem1_chen} we can choose ICNN $\phi_{\sigma}$ of the form \ref{icnn_common_form} with width and depth bounded by $K = K(\varepsilon)$ and weights from compact set of parameters $\cP=\cP\left(\varepsilon\right)$ approximating $\phi$ within $\frac{\varepsilon}{2}$:
    \[
    \norm{\phi_{\sigma} - \phi}_\infty < \frac{\varepsilon}{2}.
    \]
    By Lemma~\ref{smooth_icnn} we can choose $n = n(\varepsilon, \cP)$ such that replacing ReLU activation by $CELU_n$ will lead to controllable changes in $\phi_{\sigma}$:
 \[
    \norm{\phi_{\sigma} - \phi_{s_n}}_\infty < \frac{\varepsilon}{2}.
    \] 

  Summing up the inequalities above we obtain:
\[
    \norm{\phi_{s_n} - \phi}_\infty < \varepsilon.
    \] 
Note that $CELU_n(x) = \frac{1}{n}CELU_1(nx)$. Therefore, $CELU_n$ activation functions in $\phi_{s_n}$ could be substituted with $CELU_1$ by proper modification of parameters set $\cP$ (which does not change the compactness of $\cP$).
\end{proof}

Now we are ready to prove the main Theorem.
\begin{proof}[Proof of theorem \ref{thm:approx_outer}]

By the assumption of the theorem $\phi^* = \argmin_{\phi} \L(\phi)$ is $\beta$-strongly convex. Consider
\begin{align*}
    \phi^{*, -\beta} \eqdef \phi^* - \beta \frac{\Vert\cdot\Vert_2^2}{2}.
\end{align*}
Note that $\phi^{*, -\beta}$ is convex. From Lemma \ref{lemma:conv_approx_by_lipsch} %
, we can find $\phi_L \in Lip(\cY)$ such that $\norm{\phi_L}_{Lip}\leq L = L(\varepsilon)$ and:
\begin{gather*}
    \|\phi^{*, -\beta}-\phi_L\|_{\infty} < \frac{\varepsilon}{4}.
\end{gather*}
According to \ref{lipsch_approx_by_celu} there exists class $\calF_{icnn} = \calF_{icnn}(\varepsilon)$ such that we can find there neural network $\phi_L^{nn}$ close to any $L$-Lipschitz function:

\begin{gather}
    \|\phi_L-\phi_L^{nn}\|_{\infty} < \frac{\varepsilon}{4}.
\end{gather}

Obviously, $\phi_L^{nn}$ is also Lipschitz since it is a composition of piecewise linear functions. Consider
\begin{gather*}
    \phi_L^{\beta} \eqdef \phi_L^{nn}+\beta\frac{\|\cdot\|_2^2}{2}.
\end{gather*}
Note that $\phi_L^{\beta}$ is $\beta$-strongly convex. Then,
\begin{align*}
\|\phi^*-\phi_L^{\beta}\|_{\infty} = \Vert \phi^{*, -\beta} - \phi_L^{nn}\Vert_{\infty} \leq  \|\phi^{*, -\beta}-\phi_L\|_{\infty} + \|\phi_L-\phi_L^{nn}\|_{\infty} \leq \frac{\varepsilon}{2}.
\end{align*}
Let us check that $\phi_L^{\beta}$ delivers the desired bound.
\begin{align*}
\L(\phi_L^{\beta}) - \L(\phi^*) &= \int_{\calX} \overline {\phi_L^{\beta}}(x) p(x) d x + \int_{\calY} \phi_L^{\beta}(y) q(y) d y - \Big(\int_{\calX} \overline {\phi^{*}}(x) p(x) d x + \int_{\calY} \phi^*(y) q(y) d y \Big) \\
&\leq \int_{\calX} \big\vert \overline {\phi^*}(x)-\overline {\phi_L^{\beta}}(x)\big\vert p(x) d x + 
\int_{\calY}\big\vert \phi^*(y)-\phi_L^{\beta}(y)\big\vert q(y) d y \\
&\leq \int_{\calX} \!\!\!\!\!\!\!\underbrace{\norm{\overline {\phi^*}-\overline {\phi_L^{\beta}}}_{\infty}}_{= \norm{\phi^*-\phi_L^{\beta}}_{\infty}\text{ by Lemma \ref{lemma-conj-sup-norm}}}\!\!\!\!\!\!\!\!\!\!p(x) d x + 
\int_{\calY}\norm{\phi^*-\phi_L^{\beta}}_{\infty} q(y) d y \\ &= \varepsilon,
\end{align*}
\textcolor{black}{Therefore class $\calF \!=\! \{ \phi + \beta\frac{\|.\|^2_2}{2}, \phi\!\in\!\calF_{icnn} \}$ satisfies the condition of the theorem.}
\end{proof}

\begin{proof}[Proof of corollary \ref{corollary_f}]
    It remains to provide some class $\calF$, which contains $\phi_L^{\beta}$. We can choose sufficiently expressive class $\calF_{icnn}$ and then define $\calF$ such that $\phi\in\calF$ has the form $x\mapsto \phi_{\theta}(x)+\beta \frac{\Vert x \Vert_2^2}{2}$ where $\phi_{\theta}\in\calF_{icnn}$. For compactness of $\calF$ by Lipschitz norm we can bound all weight matrices by second norm.
\end{proof}

\subsection{Proof of theorem \ref{thm:main} and corollary \ref{corollary-rademacher}}

\begin{proof}[Proof of theorem \ref{thm:main}]\label{proof-main-thm}
    Let's bound all the terms from decomposition in theorem \ref{thm:decomp} step by step.
    From corollary \ref{corollary_f} we get such class $\calF = \calF(\frac{\beta\varepsilon}{8}, \beta)$ that $\mathcal{E}_{Out}^{A}(\mathcal F) < \frac{\beta\varepsilon}{8}$.  
     The second step is to apply theorem \ref{thm:approx_inner}, which gives us  $\calT= \calT(\frac{\beta\varepsilon}{24},\calF)$, such that $3\mathcal{E}_{In}^{A}(\mathcal F, \calT) < \frac{\beta\varepsilon}{8}$.
     Finally, applying theorem \ref{thm:estim_error}, we bound remaining two terms  $$\mathcal{E}_{In}^{E}(\mathcal F, \mathcal T, N, M) + \mathcal{E}_{Out}^{E}(\mathcal F, \mathcal T, N, M) \leq 8\mathcal R_{p, N}(\mathcal H)+8\mathcal R_{q, M}(\mathcal F).$$
\end{proof}

\begin{proof}[Proof of corollary \ref{corollary-rademacher}]
\label{proof:corollary_rademacher}
     Our goal is to obtain bounds depending on sample sizes for Rademacher complexities of function classes $\calH = \cH(\cF, \cT)$ and $\calF$ which appear in the inequality \eqref{main-thm-ineq} of Theorem \ref{thm:main}. 
     \textcolor{black}{Recall that: $\calT = \{ \projY\circ T, T \in \cT_{ub}\}$; $\cT_{ub}$ is a class of feedforward neural network with bounded norms of weights, limited width and height and Lipschitz activation functions $-$ ReLU; every function $\phi\in\calF$ has the form $x\mapsto \phi_{\theta}(x)+\beta \frac{\Vert x \Vert_2^2}{2}$ where $\phi_{\theta}$ belongs to the class of input convex neural networks $\calF_{icnn}$. Here we additionally assume that neural nets from $\calF_{icnn}$ have bounded norms  and CELU activation functions.
     Let $\calT'$ be a more general class of feedforward neural networks with any Lipschitz activation functions $\sigma(x)$, e.g., ReLU, CELU and others, s.t. $\sigma(0)=0$. Note that the Rademacher complexity \textcolor{black}{of class $\calT'$ is known to attain an upper bound:} $\mathcal{R}_{p,N}\leq O(\frac{1}{\sqrt{N}})$ where $O(\cdot)$ hides the constants which do not depend on the sample size $N$, see, e.g., \citep{golowich2018size}.}
     
     \underline{\textit{Step 1}}. We start with obtaining an upper bound for $\mathcal{R}_{q,M}(\calF)$. 
     From the well-known monotonicity property of Rademacher complexity, we get that $$\calF_{icnn}\subset\calT'\Longrightarrow{\color{black}\mathcal{R}_{q,M}}(\calF_{icnn})\leq \mathcal{R}_{q,M} (\calT')\leq O(\frac{1}{\sqrt{M}}).$$
    The complexity of the constrained quadratic functions $x\mapsto \frac{\beta}{2}\Vert x\Vert_2^2$ is also $O(\frac{1}{\sqrt{M}})$. It follows from their representation using the Reproducing Kernel Hilbert spaces (RKHS), see, e.g., \citep[Thm. 6.12]{mohri2018foundations}, \citep[Lemma 5, Eq. 24]{latorre2021effect}. Then, from the additivity of Rademacher complexities it follows that the complexity of functional class $\calF$ is also bounded by $O(\frac{1}{\sqrt{M}})$.

    \underline{\textit{Step 2}}. Below we prove that $\mathcal{R}_{p,N}(\calH)$ attains an upper bound $O(\frac{1}{\sqrt{N}})$. Recall that class $\calH$ consists of functions $h$ which map $x\mapsto \big\langle x,\projY( T(x))\big\rangle -\phi\big(\projY(T(x))\big)$ where $T\in\calT_{ub}\subset \calT',\phi\in\calF\subset \calT'$. Note that the function $\phi \circ \projY \circ T$ is a composition of {\color{black}feedforward neural networks with the Lipschitz activations from $\calT'$(plus additional intermediate 1-Lipschitz ``activation'' $\projY$, see Lemma \ref{lemma_compact_proj}), and, thus, belongs to the same class $\calT'$.} By monotonocity property, the complexity of this class of functions is bounded by $O(\frac{1}{\sqrt{N}})$ which follows from the first step of this proof.

{\color{black}
    It remains to show that the complexity of the class of functions $x\mapsto \langle x, \projY(T(x)) \rangle$ for $T\in\calT_{ub}$ is also bounded by $O(\frac{1}{\sqrt{N}})$. 
    To prove this fact, we consider \textbf{empirical} Rademacher complexity for this class of functions ($\calT''$) defined as
    \begin{equation}
    \widehat{\cR}_{p, N}( \calT'') \eqdef 
    \frac{1}{N}\E{\sigma} \bigg\{ \sup_{f \in \calT''} 
        \sum_{n = 1}^{N} f(x_n) \sigma_n
    \bigg\}=
    \frac{1}{N}\E{\sigma} \bigg\{\sup_{T\in\calT} \sum_{n = 1}^{N} \langle x_n, \projY(T(x_n)) \rangle \sigma_n\bigg\}.
\label{emp-rademcaher-def}
\end{equation}
Here the difference with the classic Rademacher complexity definition \eqref{rademcaher-def} consists in the fact that the expectation is taken only w.r.t. the Rademacher random variables $\sigma=\{\sigma_n\}_{n=1}^N$. Importantly, we can derive \eqref{rademcaher-def} from \eqref{emp-rademcaher-def} by taking the additional expectation w.r.t. random samples $X\!=\!\{x_1,...,x_n\}\!\sim\! p$. 

For simplicity of further derivations, we introduce function $T': \bbR^{D}\rightarrow \bbR^{2D}$ s.t. $T'(x)=(x, T(x))$. Besides, we consider function $h:\cX\times\bbR^D\mapsto \bbR$ s.t. $h(x,r)\eqdef\langle x,\projY(r) \rangle$. Then \eqref{emp-rademcaher-def} can be rewritten as:
\begin{equation}
\widehat{\cR}_{p, N}( \calT'') 
    =\frac{1}{N}\E{\sigma\sim\{\pm 1\}^N} \bigg\{ \sup_{T' \in \calT''} 
        \sum_{n = 1}^{N} (h\circ T')(x_n)) \sigma_n
    \bigg\}.
    \label{emp-rad}
\end{equation}
We start with proving thet $h$ is a $G$-Lipschitz function. Indeed, let $x_1,x_2$ and $ r_1,r_2$ belong to $\calX$, $\bbR^D$, respectively. Let $y_1 = \projY(r_1)$, $y_2 = \projY(r_2)$. Recall that $\cX, \cY$ are compact subsets of $\bbR^D$ which means that for all $x\in\calX$, $y\in\calY$ they have bounded norms: 
$\max(\|x\|_2,\|y\|_2)\leq R$. Then:
\begin{eqnarray}
|h(x_1,r_1)-h(x_2,r_2)|^2=|\langle x_1, y_1\rangle - \langle x_2, y_2\rangle|^2=| \langle x_1, y_1-y_2\rangle + \langle x_1-x_2, y_2\rangle|^2\leq
\nonumber\\
| \langle x_1, y_1-y_2\rangle|^2 + |\langle x_1-x_2, y_2\rangle|^2\leq \|x_1\|^2_2 \cdot \|y_1-y_2\|_2^2 + \|y_1\|^2_2 \cdot \|x_1-x_2\|_2^2\leq 
\nonumber\\
R^2 \cdot (\|y_1-y_2\|_2^2 + \|x_1-x_2\|_2^2) \overset{\text{cf. \eqref{eq:projY_is_lipschitz}}}{\leq} R^2 \cdot (\|r_1-r_2\|_2^2 + \|x_1-x_2\|_2^2) = \nonumber\\
\underbrace{R^2}_{G\,\eqdef} \|(x_1,y_1)-(x_2,y_2)\|_2^2.\nonumber
\end{eqnarray}

This allows us to apply \citep[Lemma B.1]{lei2023generalization} (taken from \citep{maurer2016vector}) to get
\begin{eqnarray}
    \eqref{emp-rad}\stackrel{\citep{maurer2016vector}}{\leq} \frac{1}{N}\sqrt{2} G  \E{\sigma\sim\{\pm 1\}^{2Nd}} \bigg\{ \sup_{T' \in \calT''} \sum_{n = 1}^{N} \sum_{t = 1}^{2d} T'_t(x_n) \sigma_{n,t}\bigg\}\leq\nonumber\\
    \frac{1}{N}\sqrt{2} G \E{\sigma\sim\{\pm 1\}^{Nd}} \bigg\{ \sum_{n = 1}^{N} \sum_{t = 1}^{d} \underbrace{[x_{n}]_t}_{=T_t'(x_n)} \sigma_{n,t}\bigg\} + \frac{1}{N}\sqrt{2} G \E{\sigma\sim\{\pm 1\}^{Nd}} \bigg\{ \sup_{T \in \calT} \sum_{n = 1}^{N} \sum_{t = 1}^{d} \underbrace{T_t(x_n)}_{=T'_{t+d}(x_n)} \sigma_{n,t}\bigg\}.\label{emp-rad-2}
\end{eqnarray}
Applying \citep[Proposition 5.5]{lei2023generalization}, we get that
\begin{eqnarray}
    \E{\sigma\sim\{\pm 1\}^{Nd}} \bigg\{ \sup_{T \in \calT} \sum_{n = 1}^{N} \sum_{t = 1}^{d} T_t(x_n) \sigma_{n,t}\bigg\} \leq \sqrt{d} C_1\cdot \sqrt{C_2 \cdot (\sum_{1\leq i < j \leq N} (x_i^T x_j)^2)^{\frac{1}{2}} + \sum_{j=1}^N \|x_j\|_2^2}
\end{eqnarray}
where $C_1$, $C_2$ are two constants which depend on only the number of layers in neural nets from $\calT_{ub}$ and bounds on the weights. Besides, 
$$\E{\sigma\sim\{\pm 1\}^{Nd}} \sum_{n = 1}^{N} \sum_{t = 1}^{d} [x_{n}]_t \sigma_{n,t}= \sum_{n = 1}^{N} \sum_{t = 1}^{d} [x_{n}]_t \cancelto{0}{\E{\sigma\sim\{\pm 1\}^{Nd}}\sigma_{n,t}}=0
$$
Using these observations, we get
\begin{eqnarray}
    \eqref{emp-rad-2}\leq \frac{1}{N}\sqrt{2} G \bigg\{ 
    \sqrt{d} C_1  \sqrt{C_2 \cdot (\sum_{1\leq i < j \leq N} (x_i^T x_j)^2)^{\frac{1}{2}} + \sum_{j=1}^N \|x_j\|_2^2}\bigg\}.
    \label{emp-rad-fin}
\end{eqnarray}

In remains to take expectation of \eqref{emp-rad-fin} w.r.t. $X=\{x_1,...,x_N\}\sim p$. Before moving forward, we note that $\{x_1,...,x_N\}\sim p$ are i.i.d. vectors. 

{\color{black}Additionally note that since we are working on the compact subsets of $\bbR^D$ for each $i\neq j$:}
$$
\E{X}(x_i^T x_j)^2 \leq \E{X}\|x_i\|_2^2 \cdot \E{X}\|x_j\|_2^2\leq R^4.
$$

{\color{black}At the same time, for each $i$ we have
$
\E{X} \|x_i\|_2^2 
\leq R^2.
$} Then we get,
\begin{eqnarray}
    \frac{1}{N}\sqrt{2d} G  C_1\cdot\E{X} \bigg\{ 
     \sqrt{C_2 \cdot (\sum_{1\leq i < j \leq N} (x_i^T x_j)^2)^{\frac{1}{2}} + \sum_{j=1}^N \|x_j\|_2^2}\bigg\}\stackrel{\text{Jensen's ineq.}}{\leq} \nonumber\\
    \frac{1}{N}\sqrt{2d} G  C_1\cdot \bigg\{ \sqrt{C_2 \cdot \E{X}\big(\!\!\sum_{1\leq i < j \leq N} (x_i^T x_j)^2\big)^{\frac{1}{2}} + \E{X}\sum_{j=1}^N \|x_j\|_2^2} \bigg\}\leq\nonumber
    \\
    \frac{1}{N}\sqrt{2d} G  C_1\cdot \bigg( C_2 \cdot (\sum_{1\leq i < j \leq N} \underbrace{\E{X}(x_i^T x_j)^2}_{\leq {\color{black}R^4}})^{\frac{1}{2}} + N\underbrace{\E{X} \|x_1\|_2^2}_{\leq R^2} \bigg)^{\frac{1}{2}}\leq\\ \frac{1}{N}\sqrt{2d} G  C_1\cdot \bigg( C_2 \cdot \sqrt{\frac{N(N-1)}{2}}R^2 + NR^2 \bigg)^{\frac{1}{2}}=O(\frac{1}{\sqrt{N}}).\nonumber
\end{eqnarray}
It completes the proof.}
\end{proof}

\section{Extended Discussion on Theoretical Assumptions \& Lower Bound Analysis \& Practical Relevance}
\label{app-extended-discussion}
\textbf{Discussion on the $\beta$-strongly convexity assumption.} It seems we can not get rid of this assumption. Now we explain the roots of this below:

\begin{itemize}[leftmargin=*]
    \item If we allow in Theorem \ref{thm:approx_outer}, that $\varphi^*$ is not $\beta$ - Lipschitz, we still may achieve $\mathcal{L}(\varphi_L^{\beta}) - \mathcal{L}(\varphi^*) < \varepsilon$, but for $\beta = C \varepsilon$, $C$ is some constant.
    \item This makes Theorem \ref{thm:decomp} not useful, because we restrict $L^2$ gap by $\frac{\varepsilon}{C \varepsilon} = \frac{1}{C}$, i.e., can not achieve arbitrary small gap.
\end{itemize}

Theoretically, $\beta$-strong convexity for $\varphi$ holds where the OT mapping between distributions is regular enough. By \citep{kakade} and our Lemma \ref{lemma:smooth_lipsch} this means that $T_{\varphi}$ is Lipschitz, i.e., does not push close points far away from each other. We assume this holds true for reasonable (smooth, supported on the same set) underlying distributions. 

\textbf{Lower bound analysis.} To start with, we need to explain how we position our paper compared to other works which conducts statistical analysis of OT (primarily, \citep{hutter2021minimax}, \citep{divol2025optimal}, \citep{gunsilius2022convergence}). Overall, the general framework of these papers is as follows: 

\begin{enumerate}[leftmargin=*]
    \item They place certain restrictions on source/target distributions and class of functions in which we are looking OT mapping;
    \item Under these assumptions, they derive (minimax-optimal) rates;
    \item Then they discuss or construct some family of approximators which to some extend satisfies the assumptions.
\end{enumerate}

This framework provides a strong theoretical foundation, though one practical consideration is that the proposed approximators (which achieve the optimal rate) may differ from those typically adopted in real-world applications. 

In our paper, we take a different approach. We do not place restrictive assumptions on the optimized class of functions (we pick MLPs, ICNNs), and analyze the minimax OT solvers, which seem to be \textit{quite popular} in practice, see \S \ref{background} for the list of the methods which are relevant to us. 

We think that in our setting it is much harder to conduct study on convergence with more sharp rates. One reason is the complexity of saddle-point nature minimax solvers (the aforementioned papers study just minimization), the other is our weak restrictions on function classes. At the same time, we emphasize that our work seems to be the \textit{first} which conducts statistical study of a popular and practically-appealing algorithm used to recover OT maps.

{\color{black}\textbf{Practical relevance.} In our paper, we study theoretical properties of certain Optimal Transport solvers. We insist that the ability to model OT maps rather than arbitrary push-forward maps is an important and practically inspired field of research. One of the most intriguing applications of OT is unpaired domain translation of images, see, e.g., \citep{not,shi2023diffusion}. Here, thanks to the 'optimality' of the learned image-to-image mapping, the content of the transformed images remains similar to that of the originals. This could be crucial for image enhancement and super-resolution applications, for example. The other solid application of the OT is computational biology \citep{tong2020trajectorynet,demetci2022scot, bunne2023learning} (manipulating single-cell data). Here, the OT appears by nature of the problem; it turns out to be deeply connected with the development and proliferation of cell populations. 

Importantly, all of these practical use cases have inspired the development of a variety of minimax OT solvers (close to our theoretical analysis), see the list in the end of \S\ref{background}.}

\section{Additional Experimental Illustrations}
\subsection{Estimation Errors of Baseline Methods}
\label{app-baselines}

\begin{wrapfigure}{r}{0.5\textwidth}
\vspace{-7mm}
\centering
\includegraphics[width=\linewidth]{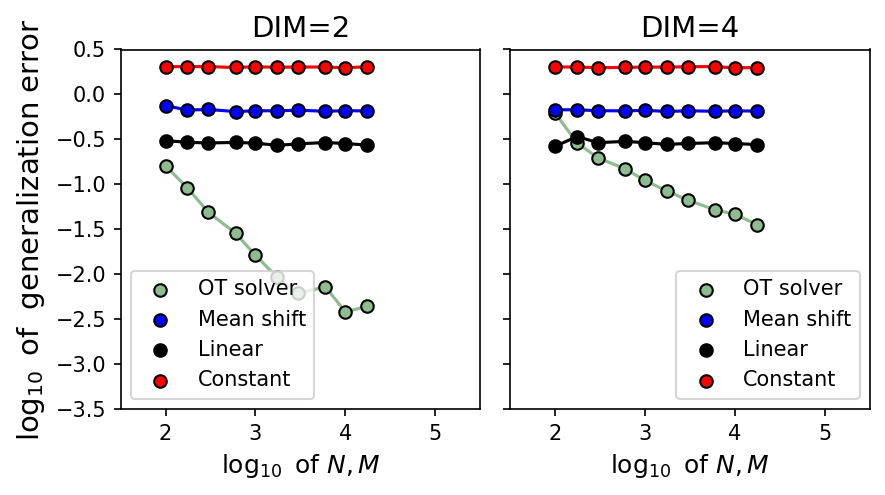}
\vspace{-9mm}
\caption{\centering Estimation errors of the baseline solvers using a limited number of available empirical training samples $N,M$. 
}
\label{fig-baselines}
\vspace{-2mm}
\end{wrapfigure}

In this section, we consider three non-neural baseline methods: barycenter translation estimator (\textit{mean shift}), \textit{constant} and \textit{linear}. Suppose that $x$ and $y$ correspond to samples from distributions $p$ and $q$, respectively. Then for constant estimator the transport map is given by $T=\bbE_{q}(y)\equiv \mu_q$. For barycenter translation estimator, the transport map $T$ has the form $T(x)=x+(\bbE_{q}(y)-\bbE_{p}(x))=x+\mu_q-\mu_p$.  Finally, the linear estimator is given by: \begin{equation*}
    T(x)=\Sigma_p^{-\frac{1}{2}}\Big(\Sigma_p^{\frac{1}{2}} \Sigma_q \Sigma_p^{\frac{1}{2}}\Big)\Sigma_p^{-\frac{1}{2}}(x-\mu_p)+\mu_q.
\end{equation*}
We refer to \citep{korotin2021neural} for the additional details on these baselines. We run the solvers in the experimental setup of \S\ref{sec-estimation-error} using the limited number of available training samples $N,M$ for \textit{learning} the means and covariances of input and target distributions.
The estimation errors are reported in Fig. \ref{fig-baselines}.

The results show that the \textit{linear} estimator achieves the strongest performance among all baselines. Still, the estimation errors of the OT solver are \textbf{significantly lower} than the errors of all baseline solvers. Besides, as expected, these non-neural estimators do not show noticeable dependence on the number of available training samples.

\subsection{High-dimensional Experiments}
\label{app-high-dim}

In this section, we report the estimation error of OT solver in dimensions $D\!=\!8,16,32,64,128$. We follow the experimental setup described in \S \ref{sec-estimation-error} and visualize the results in Fig. \ref{fig-high-dim}. The \textbf{results} confirm our main conclusion that the estimation error of the \textit{minimax} OT solver decreases with the increase of the number of empirical training samples $N,M$ w.r.t. $\log_{10}$-$\log_{10}$ scale. 

Meanwhile, we plot the slope of convergence of the logarithm of the error w.r.t. the logarithms of $N,M$ and observe that it is not always less than $-0.5$. Besides, we do not observe any evident dependence between the dimension and the slope of convergence, e.g., the highest dimension $D=128$ corresponds to the convergence with the steepest slope $<-0.58$. These observations could be explained by two main reasons. First, our Corollary \ref{corollary-rademacher} establishes the \textit{upper bound} on the estimation error. Thus, while upper bound theoretically exhibits linear convergence with the slope equal to $-0.5$, it does not guarantee that the error itself exhibits the same rate of convergence. Second, as was noted in \S\ref{sec-estimation-error}, the theoretical bounds might be corrupted by the optimization errors. These optimization errors might have different effects on the generalization error depending on the number of available training samples $N,M$; analysis of their roots and consequences represent an interesting future research direction.

\begin{figure}[t!]
    \centering
    \includegraphics[width=0.98\linewidth]{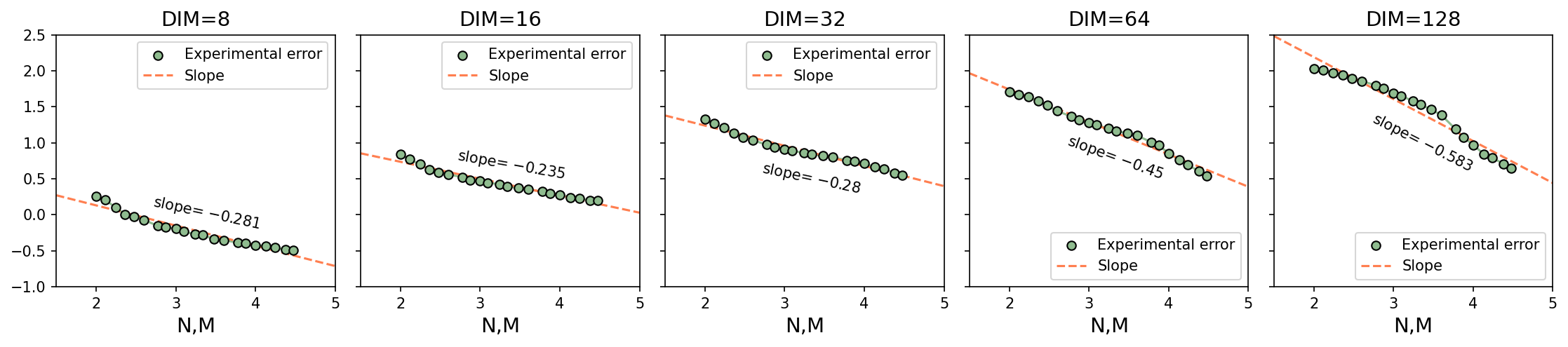}
    \vspace{-2mm}\caption{Convergence rates of the OT solver learned with the quadratic transport cost and a limited number of empirical training samples for distributions of dimensions $D=8,16,32,64,128$.}
    \label{fig-high-dim}
\end{figure}

\subsection{Approximation Errors for Shallow Architectures}
\label{app-collapse}

\begin{figure}[t!]
    \centering
    \includegraphics[width=0.98\linewidth]{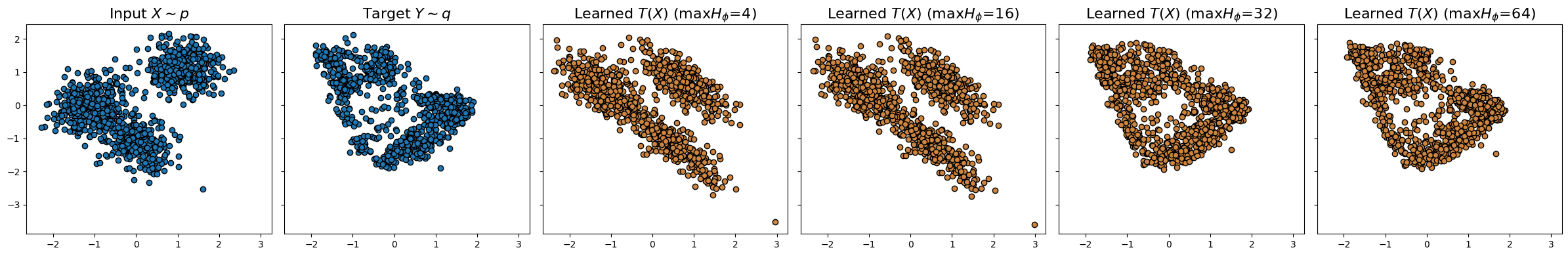}
    \caption{\centering Visualization of the solutions of OT solver under the experimental setup from \S \ref{sec-approximation-error} $-$ $\text{DIM}=2$, $\max H_T=8$. The figure shows that the usage of shallow architectures for potential $H_\phi$ leads to improper collapsing solutions.}
    \label{fig-collapse}
\end{figure}

In our experiments illustrating the effects of the approximation error (\S \ref{sec-approximation-error}), we discussed the roots of potential improper solutions and large approximation errors in the case of extremely shallow architectures for potential $\phi_{\theta}$ and/or transport map $T_{\omega}$. For example, in our Fig. \ref{fig-approx} (main text), one might notice 
that the approximation error is accumulating at specific points, see the values for $\max H_{\phi}\!=\!4,16$ ($\text{D}=2$, $\max H_T=8$) or $\max H_T\!=\!1,2$ ($\text{D}=4$, $\max H_\phi=4$). 

To further clarify the results of the OT solver for the case of extremely shallow architectures, we visualize its solutions in Fig. \ref{fig-collapse}. Specifically, we plot the results of OT solver trained in the experimental setup described in \S \ref{sec-approximation-error} for dimension $D=2$ and the transport map $T_{\omega}$ with the maximum width of the hidden layer equal to $8$. According to the approximation error reported for this experiment in Fig. \ref{fig-approx}, the solutions are divided into two modes $-$ the mode with higher error corresponds to extremely shallow potential architectures, s.t., $\max H_{\phi}=4,16$. The mode with smaller error corresponds to experiments with wider architectures of potential $\phi$, s.t., $\max H_{\phi}=32,64$. Our Fig. \ref{fig-collapse} shows that the results of the OT solver with shallow architectures are indeed collapsing and represent bad approximations of the target samples which could be explained by the low expressivity of the used nets. At the same time, wider nets provide fine approximations of the target samples and are slightly different to each other.

\section{Experimental Details}
\label{app-details}    
\textbf{General remark.} 
We parametrize the transport map $T_{\omega}$ using MLP architecture with ReLU activations. 
For the potential $\phi_{\theta}$, we use ICNN architecture with the quadratic skip connections. 
Note that in our theoretical investigations, we consider more traditional ICNNs with linear skip connections, see \S \ref{sec:results}. However, we decided to consider another type of ICNN for potential parametrization in order to minimize the impact of the approximation error since these ICNNs were used to construct the benchmark pairs from \citep{korotin2021neural}. While we can not totally eliminate the approximation error coming from the parametrization of OT map $T_w$, we strive to decrease it by selecting rich NN architecture ($\approx$ 2 mln of parameters). {\color{black} Importantly, ICNNs found their popularity in a number of specific (e.g., OT) and general-purpose generative models, see \citep{huang2021convex, dugan2023q, kornilov2024optimal, amos2023on, mokrov2021largescale}}.

\textbf{Other details.} Each experiment requires several hours of training on one A100 GPU. We use Adam optimizer \citep{kingma2014adam} with lr=1e-3. In \S \ref{sec-estimation-error}, we use batch size equal to $\min(1024,N)$ for each dataset size $N\!=\!M$. In \S \ref{sec-approximation-error}, we use batch size equal to $1024$.
We run the solver for 10K steps, performing 10 inner iterations of updating weights of $T_{\omega}$ at each step. We calculate the error between the learned $\hat{T}$ and true $T^*$ OT maps: $\|\hat{T}-T^*\|_{L^2(\mathbb{P})}$ using $2^{12}$ test samples. We implement standard linear regression using \texttt{numpy.polyfit} package. For each experiment in \S \ref{sec-experiments} and Appendix \ref{app-high-dim}, we run it 3 times with different random seeds, log the errors and report their mean.

{\color{black}\section{Generative Adversarial Networks: Relation to the Semi-dual OT Solvers \& Theoretical Analysis}
\label{subsec:works-gans}

The objective of the semi-dual minimax continuous OT solvers \eqref{minimax} resembles that of GANs \citep{goodfellow2014generative}. This fact motivates us to discuss the relation of the OT solvers to GANs and review some of the existing theoretical results on GANs below. 

\subsection{Relation of Semi-dual Minimax Continuous OT Solvers to GANs}

While there are many papers on GANs \citep{pan2019recent} and a relatively large number of works studying the theoretical aspects of their objectives, they have a limited relevance to our study for several reasons. First, OT solvers and GANs pursue different goals. The main goal of OT solvers is to approximate true OT maps $T^*$, i.e., to learn a \textit{particular} optimal mapping (generator) between the given source and target distributions;
accordingly, our theoretical work focuses on the error of this approximation. {\color{black}Meanwhile, 
GANs are focused on learning an arbitrary mapping which pushforwards some noise to a target distribution; thus, 
existing theoretical results on GANs mostly investigate the error of approximating the ground-truth distribution $q$ by the generated $p_{gen}$.} Second, OT solvers and GANs have an evident difference corresponding to the order of 
optimization over generator $G$ (map $T$) and discriminator $D$ (potential $\phi$) in their objectives. In particular, for GANs, the optimization over the generator $G$ is done in the outer problem of their objective, while for OT solvers, the optimization over OT maps is conducted in the inner problem, i.e., the OT map $T^*$ corresponds to the solution of the inner problem. We emphasize the importance of this difference $-$ for OT solvers, OT map $T^*$ corresponds to the solution of the inner problem, which makes the task of theoretical investigation of the error of $T^*$ approximation even more difficult.

Besides, we outline the theoretical and methodological differences between GANs and semi-dual OT. 

\textbf{The origin of GANs training objective.} We have some noise $p_z$, a (parametric) generator model $G$ and target distribution $q$, and want $G_{\#}p_z = q$. To achieve this goal, we use a discrepancy, $\mathcal{D}(G_{\#}p_z, q)$, which compares generated and target distributions. Theoretically, the discrepancy could be anything, e.g., KL (JSD), MMD, Wasserstein distance, etc. The discrepancy ensures the generated and target distributions are close, but does not restrict $G$ to be of a particular form (i.e., any $G$ which maps $p_z$ to $q$ would result in zero discrepancy). Also, the order of practical optimization is also dictated by the explained scheme. Our aim is to solve $\min_{G} \mathcal{D}(G_{\#}p_z, q)$, i.e., the outer optimization is w.r.t. $G$. The inner adversarial maximization, in turn, appears due to a particular variational form of the discrepancy $\mathcal{D}$.

\textbf{The origin of semi-dual OT} stems from objective eq. \eqref{dualconvex} in our paper. I.e., there is a known theoretical result that optimizing eq. \eqref{dualconvex} w.r.t. potentials $\varphi$ results in OT cost (therefore, the outer optimization of semi-dual OT is w.r.t. potential (or discriminator), not mapping (generator)). The other theoretical result states that \textit{Optimal Transport mapping} between $p$ and $q$ is $\nabla \overline{\varphi^*}(x)$, where $\varphi^*$ is the solution to eq. \eqref{dualconvex}. Therefore, the final objective we study in our work, eq. \eqref{minimax}, is almost the same as eq. \eqref{dualconvex}, but with Fenchel conjugate appearing in \eqref{dualconvex} substituted with the auxiliary learned \textit{argument} of Fenchel transform, which is known to be exactly $\nabla \overline{\varphi}$. 

As we can see, the \textbf{origins} of semi-dual OT and GAN-based training drastically differ, making the distinction clearer.  

\subsection{Theoretical Analysis of Adversarial Generative Models}

Below we discuss two groups of theoretical results on GANs which are slightly relevant to us.

\textbf{Estimation error of distance between generated and true distributions}. The work of \citep{arora2017generalization} introduces a notion of an $\mathcal{F}$-\textit{distance} %
between the pair of measures w.r.t. a given class of discriminators $\mathcal{F}$:
$$
d_{\mathcal{F}}(p,q)= \sup_{D\in \calF} \bbE_{x\sim p} \psi(D(x)) + \bbE_{y\sim q} \psi(1-D(y))-2\psi(\frac{1}{2}).
$$
Here $\psi:[0,1]\rightarrow\bbR$ is any monotone concave function.
For certain classes of functions, these distances define an inner optimization problem for well-known GANs, e.g., Wasserstein, Vanilla, MMD, Sobolev, etc. Class $\mathcal{F}_{nn}$ corresponding to neural networks with a bound on a number of parameters yields a so-called \textit{neural net (NN) distance}. 
The paper \citep{arora2017generalization} establishes an upper bound on the estimation error of this distance, i.e., difference of NN distances $d_{\mathcal{F}_{nn}}(q, p_{gen})$ for empirical and continuous distributions. Subsequent work \citep{ji2018minimax} extends this analysis by providing a tighter upper bound and lower bound for this difference. In some sense, these works analyze the error in GANs inner optimization problem coming from the usage of empirical measures $\widehat{p}$, $\widehat{q}$ instead of the real ones.
Still, these works ignore the errors rooted in outer optimization problem of GANs. 

\textbf{Estimation error of GANs.} Other class of papers takes into account errors in GANs' \textit{outer} optimization problem. They commonly consider the solutions of GANs' empirical optimization problem:
$\widehat{w}^* = \arg\min_{w\in \Omega} d_{\mathcal{F}_{nn}}(\widehat{q},\widehat{p}_{gen(w)})
$ where $\widehat{p}_{gen(w)}$ denotes the distribution generated from empirical samples of distribution $\widehat{p}$ by a generator $G_{w}$ (with parameters $w$ from the class $\Omega$). Then the papers derive upper bounds for %
the estimation error defined as
$$
d_{\mathcal{F}}(q, p_{gen(\widehat{w}^*)}) - \inf_{w\in \Omega} d_{\mathcal{F}}(q, p_{gen(w)})
$$
for different variants of function classes $\calF$.
For example, \citep{ji2021understanding} derives an upper and minimax lower bound for this error and $\calF_{nn}$ class; \citep{zhang2018discrimination} considers the same class but a different type of error using $w^*=\arg\min_{w\in \Omega} d_{\mathcal{F}}(\widehat{q},p_{gen(w)})$ where $p_{gen(w)}$ is the distribution generated by $G_w$ from $p$; \citep{liang2017well} and \citep{singh2018nonparametric} study the same type of error but consider the discriminator class in Sobolev space; \citep{uppal2019nonparametric} extends these result by modeling the discriminator and generator as Besov function classes. In general, these bounds are constructed based on Rademacher complexities of function spaces and are utilized to construct the bounds on the distance between $p_{gen}$ and $q$.

For completeness, we mention several other works which investigate theoretical properties of specific GANs, e.g., {\color{black}Vanilla \citep{biau2020some}}, bidirectional \citep{liu2021non}, Wasserstein \citep{biau2021some} and CycleGAN \citep{chakrabarty2022translation}, \citep{sun2024theoretical}, \citep{moriakov2020kernel}.

\section{LLM Usage}

LLMs have been used only for polishing of the text purposes, i.e., for checking grammar and spelling, and for assistance with sentences rephrasing.

}

\end{document}